\definecolor{linkcolor}{RGB}{83,83,182}
\begin{document}

\title{High-Dimensional Private Empirical Risk Minimization\\ by Greedy Coordinate Descent}
\date{}

\author[1]{Paul Mangold}
\author[1]{Aurélien Bellet}
\author[2,3]{Joseph Salmon}
\author[4]{Marc Tommasi}

\affil[1]{Univ. Lille, Inria,  CNRS, Centrale Lille, UMR 9189 - CRIStAL, F-59000 Lille, France}
\affil[2]{IMAG, Univ Montpellier, CNRS, Montpellier, France}
\affil[3]{Institut Universitaire de France (IUF)}
\affil[4]{Univ. Lille, CNRS, Inria, Centrale Lille,  UMR 9189 - CRIStAL, F-59000 Lille, France}

\maketitle

\begin{abstract}
  In this paper, we study differentially private empirical risk
  minimization (DP-ERM). It has been shown that the worst-case
  utility of DP-ERM reduces polynomially as the dimension increases. This is a
  major obstacle to privately learning large machine learning
  models. In high dimension, it is common for some model's parameters
  to carry more information than others. To exploit this, we propose a
  differentially private greedy coordinate descent (DP-GCD)
  algorithm. At each iteration, DP-GCD privately performs a
  coordinate-wise gradient step along the gradients' (approximately)
  greatest entry. We show theoretically that DP-GCD can achieve a
  logarithmic dependence on the dimension for a wide range of problems by
  naturally exploiting their structural properties (such as
  quasi-sparse solutions). We
  illustrate this behavior numerically, both on synthetic and real datasets.
\end{abstract}

\section{Introduction}

Machine Learning (ML) crucially relies on data, which can be sensitive or
confidential. Unfortunately, trained models are prone to leaking
information about specific training points
\citep{shokri2017Membership}. A standard approach for training models
while provably controlling the amount of leakage is to solve an
empirical risk minimization (ERM) problem under differential privacy
(DP) constraints \citep{chaudhuri2011Differentially}. In this work, we
consider the generic problem formulation:
\begin{align}
  \label{eq:dp-erm}
  w^* \in
  \argmin_{w \in \mathbb{R}^p}
  \Big\{ f(w) = \frac{1}{n} \sum_{i=1}^n \ell(w; d_i) \Big\} \enspace,
\end{align}
where $D = (d_1, \dots, d_n)$
is a dataset of $n$ samples drawn from a universe $\cX$, and
$\ell(\cdot, d): \RR^p \rightarrow \RR$ is a loss function which is
convex and smooth for all $d\in D$.

The DP constraint in DP-ERM induces a trade-off between the precision
of the solution (utility) and privacy.
\citet{bassily2014Differentially} proved lower bounds on utility under
a fixed DP budget. These lower bounds scale polynomially with the
dimension $p$. Since machine learning models are often
high-dimensional (\eg $n \approx p$ or even $n \ll p$), this is a
massive drawback for the use of DP-ERM.

To go beyond this negative result, one has to leverage the fact
that high-dimensional problems often exhibit
some \emph{structure}. In
particular, some parameters are typically more
significant than others: it is notably (but not only) the case when
models are sparse, which is often desired in high dimension
\citep{tibshirani1996Regression}. Private learning algorithms could
thus be designed to exploit this by focusing on the most significant
parameters of the problem.
Several works have tried to exploit such high-dimensional problems' structure
to reduce the dependence on the dimension, \eg from polynomial to
logarithmic.
\citet{talwar2015Nearly,bassily2021NonEuclidean,asi2021Private}
proposed a DP Frank-Wolfe algorithm (DP-FW) that exploits the solution's
sparsity. However, their algorithm only works on $\ell_1$-constrained
DP-ERM, restricting its range of application.
For sparse linear regression,
\citet{kifer2012Private} proposed to first identify some support and
then solve the DP-ERM problem on the restricted support.
Unfortunately, their approach requires implicit knowledge of the
solution's sparsity.
Finally,
\citet{kairouz2021Nearly,zhou2021Bypassing} used  public data to estimate
lower-dimensional subspaces, where the gradient can
be computed at a reduced privacy cost. A key limitation is that such public
data set, from the same domain as the private data, is typically not available
in many
learning scenarios involving sensitive data.

In this work, we propose a private algorithm that does not have these
pitfalls: the differentially private greedy coordinate descent
algorithm (DP-GCD). At each iteration, DP-GCD privately determines the
gradient's greatest coordinate, and performs a gradient step in its
direction. It focuses on the most useful parameters, avoiding wasting
privacy budget on updating non-significant ones.  Our algorithm
works on any smooth, unconstrained DP-ERM problem. We also propose a proximal
version to tackle non-smooth regularizers. Crucially, DP-GCD is adaptive to
the sparsity of the solution, and is able to ignore small (but non-zero)
parameters, improving utility even on non-sparse problems.

Formally, we show that DP-GCD reduces the dependence on the dimension
from $\sqrt{p}$ or $p$ to $\log(p)$ for a wide range of unconstrained
problems. This is the first algorithm to obtain such gains without
relying on $\ell_1$ or $\ell_0$ constraints. In fact, DP-GCD's utility
naturally depends on $\ell_1$-norm quantities (\ie distance from
initialization to optimal or strong-convexity parameter) and spans two
different regimes. When these $\ell_1$-norm quantities are $O(1)$ as
assumed in DP-FW, DP-GCD attains $O(\log(p)/n^{2/3}\epsilon^{2/3})$
and $O(\log(p)/n^2\epsilon^2)$ utility on convex and strongly-convex
problems respectively, outperforming existing DP-FW algorithms without
solving a constrained problem. In the second regime, when the
$\ell_2$-norm counterpart of the above quantities are $O(1)$ as
assumed for DP-SGD and its variants, we show that DP-GCD adapts to the
problem's underlying structure. Specifically, it is able to
\emph{interpolate between logarithmic and polynomial dependence on the
  dimension}.  In addition to these general utility results, we prove
that for strongly convex problems with quasi-sparse solutions
(including but not limited to sparse problems), DP-GCD converges to a
good approximate solution in few iterations. This improves utility in
the $\ell_2$-norm setting, replacing the polynomial dependence on the
ambient space's dimension by the quasi-sparsity level of the solution.
We evaluate both our
algorithms numerically on real and synthetic datasets, validating our
theoretical observations.

Our contributions can be summarized as follows:
\begin{enumerate}
  \item We propose differentially private greedy coordinate descent
        (DP-GCD), a method that performs
        updates along the (approximately) greatest entry of the gradient. We
        formally establish
        its privacy guarantees, and derive high
        probability utility upper bounds.
  \item We prove that DP-GCD exploits structural
        properties of the problem (\eg quasi-sparse solutions) to
        improve utility. Importantly, DP-GCD
        does not require prior knowledge of this structure to exploit
        it.
  \item We empirically validate our theoretical results on a variety of
        synthetic and real datasets, showing that DP-GCD outperforms
        existing private algorithms on high-dimensional problems with
        quasi-sparse solutions.
\end{enumerate}

The rest of the paper is organized as follows. First, we discuss
related work in more details in \Cref{sec:related-works}, and present the
relevant mathematical background in
\Cref{sec:preliminaries}. \Cref{sec:priv-greedy-coord} then introduces
DP-GCD, and formally analyzes its privacy and utility. We validate our
theoretical results numerically in \Cref{sec:experiments-1}. Finally,
we conclude and discuss the limitations of our results in
\Cref{sec:conclusion-and-discussion}.

\section{Related Work}
\label{sec:related-works}

\paragraph{DP-ML in Euclidean geometry}

Most of the work on differentially private empirical risk minimization
(DP-ERM) and differentially private stochastic convex optimization
(DP-SCO)\footnote{See \citep{dwork2015Preserving,bassily2016Algorithmic,jung2021New}
for techniques to convert DP-ERM results to DP-SCO.}
consider problem quantities (\eg bounds on the domain and regularity
assumptions) expressed
in $\ell_2$ norm. In this Euclidean setting, \citet{bassily2014Differentially}
analyzed the theoretical properties of DP-SGD for DP-ERM, and
derived matching utility lower bounds. Faster algorithms
based on SVRG \citep{johnson2013Accelerating,xiao2014Proximal} were designed by
\citet{wang2017Differentially}.
\citet{wu2017Bolton} studied a variant of DP-SGD with
output perturbation, that is efficient when only few passes on the
data are possible. For DP-SCO,
\citet{bassily2019Private} used algorithmic stability
arguments \citep[following work
    from][]{hardt2016Train,bassily2020Stability} to show that in some regimes, the
population risk is the same as in non-private
SCO. \citet{feldman2020Private,wang2022Differentially} then developed
efficient (linear-time) algorithm to solve this problem. In all of the
above work, \emph{the utility upper bounds scale polynomially in $p$}, which
is not suitable in high dimension. In contrast, our approach provably achieves
logarithmic dependence on the dimension for some problems.

\paragraph{DP-ML in high dimension}
Several approaches have been explored to reduce the dependence on the
dimension. One option is to consider $\ell_1$-constrained problems.
For DP-ERM, \citet{talwar2015Nearly,talwar2016Private} used a
differentially private Frank-Wolfe algorithm (DP-FW)
\citep{frank1956Algorithm,jaggi2013Revisiting} to achieve utility that
scales logarithmically with the
dimension. \citet{asi2021Private,bassily2021NonEuclidean} proposed
stochastic DP-FW algorithms, extending the above results to
DP-SCO. For more general domains (\eg polytopes),
\citet{kasiviswanathan2016Efficient} randomly project the data on a
smaller-dimensional space, and lift the result back onto the original
space. The dependence in the dimension is encoded by the Gaussian
width of the domain, again leading to $O(\log p)$ error for the
$\ell_1$ ball or the simplex. \citet{wang2017Differentially} derived a
faster mirror descent algorithm for DP-ERM whose utility also depends
on the Gaussian width of the domain. Our approach matches the
$O(\log p)$ dependence of the above methods when key quantities are
bounded in $\ell_1$ norm, but can also achieve such gains for more
general problems, \eg when the problem has a quasi-sparse solution.
\citet{kifer2012Private} previously leveraged the solution sparsity
for the specific problem of sparse linear regression: they first
identify some support, and then solve DP-ERM on this restricted
support. Similarly, \citet{wang2019Differentially,hu2022High} proposed
hard thresholding-based algorithms for DP-ERM and DP-SCO under
sparsity ($\ell_0$ norm) constraints. Both approaches achieve an error of $O
(\log
p)$
but rely either on prior knowledge on the solution's sparsity, or on
the tuning of an additional hyperparameter. In contrast, our approach
automatically adapts to the sparsity and works also when solutions are
only quasi-sparse.  Finally,
\citet{kairouz2021Nearly,zhou2021Bypassing} estimate lower-dimensional
gradient subspaces using public data. This reduces noise addition, but
in practice, public data is only rarely available.

\paragraph{Coordinate descent}
CD algorithms have a long
history in optimization \citep[see][for detailed reviews on CD]
{wright2015Coordinate,shi2017Primer}. Most approaches have focused on
randomized or cyclic choices of coordinates
\citep{tseng2001Convergence,nesterov2010Efficiency}, with proximal and
parallel variants
\citep{richtarik2014Iteration,fercoq2014Accelerated,hanzely2018SEGA},
sometimes applied to the dual problem
\citep{shalev-shwartz2013Stochastic}.
In this work, our focus is on greedy coordinate descent methods, which update
the coordinate with greatest gradient entry
\citep{luo1992Convergence,tseng2009Coordinate,dhillon2011Nearest}.
\citet{nutini2015Coordinate} showed
improved convergence
rates for smooth, strongly-convex functions, by measuring strong
convexity in the $\ell_1$-norm. Our work builds upon these results to design
and analyze the first \emph{private} greedy CD approach.
Although techniques such as fast nearest-neighbor schemes have been proposed
to compute the (approximate) greedy update more efficiently
\citep{dhillon2011Nearest,nutini2015Coordinate,karimireddy2019Efficient},
greedy CD methods are often slower (in wall-clock
time) than their randomized or cyclic counterparts \citep{Massias_Gramfort_Salmon17b}.
However, in the private setting we consider, the main focus is not on
computing time but on achieving the best privacy-utility trade-off. This gives
a distinct advantage to greedy CD, as it provides a way to perform
the (approximately) most useful coordinate update under a given privacy
budget instead of wasting budget on updating random (potentially useless)
coordinates.
The analysis of proximal extensions of greedy CD for composite problems with
non-smooth parts is known to be challenging even in the non-private setting.
\citet{karimireddy2019Efficient}
proved convergence rates only for $\ell_1$- and box-regularized problems,
using a modified greedy CD algorithm. In this work, we propose and empirically
evaluate a proximal
extension of our DP-GCD algorithm with formal privacy guarantees, but leave
its utility analysis for
future work; see the discussion in Section~\ref{sec:conclusion-and-discussion}.

\paragraph{Private coordinate descent}

Differentially Private Coordinate Descent (DP-CD) was recently studied by
\citet{mangold2021Differentially}, who analyzed its utility and
derived corresponding lower
bounds. They showed that DP-CD can exploit
coordinate-wise regularity assumptions to use larger step-sizes, outperforming
DP-SGD when gradient coordinates are imbalanced. Our
DP-GCD also shares this property.
\citet{damaskinos2021Differentially} proposed a dual
coordinate descent algorithm for generalized linear models. Private CD
has also been used by \citet{bellet2018Personalized} in a
decentralized setting. All these works use random selection,
which fails to exploit key problem's properties such as quasi-sparsity, and
thus suffer a polynomial dependence on the dimension $p$. In contrast, our
private greedy selection rule focuses on the most
useful coordinates, thereby reducing the dependence on
$p$ to only logarithmic in such settings.

\section{Preliminaries}
\label{sec:preliminaries}

In this section, we introduce important technical notions that will be used
throughout the paper.

\paragraph{Norms}
We start by defining two conjugate norms that will allow to keep track
of coordinate-wise quantities.  Let $M = \diag(M_1, \dots, M_p)$ with
$M_1, \dots, M_p > 0$, and%
\begin{align*}
  \norm{w}_{M,1} = \sum_{j=1}^p M_j^{\frac{1}{2}} \abs{w_j}\enspace,\enspace\enspace\enspace\enspace
  \norm{w}_{M^{-1}, \infty} = \max_{j \in [p]} M_j^{-\frac{1}{2}} \abs{w_j} \enspace.
\end{align*}
When $M$ is the identity matrix $I$, $\norm{\cdot}_{M,1}$ is the
standard $\ell_1$-norm and $\norm{\cdot}_{M^{-1},\infty}$ is the
$\ell_\infty$-norm.
We also define the Euclidean dot product
$\scalar{u}{v} = \sum_{j=1}^p u_i v_i$ and corresponding norms
$\norm{\cdot}_{M,2} = \scalar{\cdot}{M\cdot}^{\frac{1}{2}}$ and
$\norm{\cdot}_{M^{-1},2} =
  \scalar{\cdot}{M^{-1}\cdot}^{\frac{1}{2}}$. Similarly, we recover the
standard $\ell_2$-norm when $M=I$.

\paragraph{Regularity assumptions}
We recall classical regularity assumptions along with ones
specific to the coordinate-wise setting.
We denote by $\nabla f$ the gradient of
a differentiable function $f$, and by $\nabla_j f$ its $j$-th coordinate.
We denote by $e_j$ the $j$-th vector of $\RR^p$'s standard basis.

\textit{(Strong)-convexity.} For $q \in \{1,2\}$, a differentiable
function $f : \RR^p \rightarrow \RR$ is $\mu_{M,q}$-strongly-convex
\wrt the norm $\smash{\norm{\cdot}_{M,q}}$ if for all
$v, w \in \RR^p$,
$f(w) \ge f(v) + \scalar{\nabla f(v)}{w - v} + \frac{\mu_{M,q}}{2}\norm{w
    - v}_{M,q}^2$.  The case $M_{1,q}=\cdots=M_{p,q}=1$ recovers
standard $\mu_{I,q}$-strong convexity \wrt the $\ell_q$-norm. When
$\mu_{M,q}=0$, the function is just said to be \textit{convex}.

\textit{Component Lipschitzness.} A function
$f : \RR^p \rightarrow \RR$ is $L$-component-Lipschitz for
$L = (L_1,\dots,L_p)$ with $L_1,\dots,L_p > 0$ if for $w \in \RR^p$,
$t \in \RR$ and $j \in [p]$,
$\abs{f(w + t e_j) - f(w)} \le L_j \abs{t}$.  For $q \in \{1,2\}$, $f$
is $\Lambda_q$-Lipschitz \wrt $\norm{\cdot}_q$ if for
$v, w \in \RR^p$, $\abs{f(v) - f(w)} \le \Lambda_q \norm{v - w}_q$.

\textit{Component smoothness.} A differentiable function $f : \RR^p
  \rightarrow \RR$ is
$M$-component-smooth for $M_1,\dots,M_p > 0$ if
for $v, w \in \RR^p$,
$f(w) \le f(v) + \scalar{\nabla f(v)}{w - v} + \frac{1}{2}\norm{w - v}_{M,2}^2$.
When $M_1=\dots=M_p=\beta$, $f$ is said to be $\beta$-smooth.

Component-wise regularity assumptions are not restrictive: for
$q \in \{1,2\}$, $\Lambda_q$-Lipschitzness \wrt $\norm{\cdot}_q$
implies $(\Lambda_q, \dots, \Lambda_q)$-component-Lipschitzness and
$\beta$-smoothness implies
$(\beta, \dots, \beta)$-component-smoothness.  Yet, the actual
component-wise constants of a function can be much lower than what can
be deduced from their global counterparts. In the following of this
paper, we will use $M_{\min} = \min_{j\in[p]} M_j$,
$M_{\max} = \max_{j \in [p]} M_j$, and their Lipschitz counterparts
$L_{\min}$ and $L_{\max}$.

\begin{algorithm*}[t]
  \caption{DP-GCD: Differentially Private Greedy Coordinate Descent}
  \label{algo:private-greedy-coordinate-descent}
  \begin{algorithmic}[1]
    \State \textbf{Input:} initial $w^0 \in \RR^p$, iteration count $T > 0, \forall j \in [p],$ noise scales $\lambda_j, \lambda_j'$, step sizes $\gamma_j > 0$.
    \For{$t = 0$ to $T-1$}
    \State $\displaystyle j_t = \argmax_{j' \in [p]} \tfrac{\abs{\nabla_{j'} f(w^t) + \chi_{j'}^t}}{\sqrt{M_{j'}}}$, \hfill with $ \chi_{j'}^t \sim \Lap(\lambda'_{j'})$. \label{algo:private-greedy-coordinate-descent:choice_coord}\algorithmiccomment{Choose $j_t$ using report-noisy-max.}
    \State $\displaystyle w^{t+1} = w^t - \gamma_{j_t} (\nabla_{j_t} f(w^t) + \eta_{j_t}^t) e_{j_t}$, \hfill  with $\eta_{j_t}^t \sim \Lap(\lambda_{j_t})$. \algorithmiccomment{Update the chosen coordinate.}
    \label{algo:private-greedy-coordinate-descent:update}
    \EndFor
    \State \Return $w^T$.
  \end{algorithmic}
\end{algorithm*}

\paragraph{Differential privacy (DP)}

Let $\cD$ be a set of datasets and $\cF$ a set of possible outcomes.
Two datasets $D, D' \in \cD$ are said \textit{neighboring}
(denoted by $D \sim D'$) if they differ on at most one element.

\begin{definition}[Differential Privacy, \citealt{dwork2006Differential}]
  A randomized algorithm
  $\cA : \mathcal D
    \rightarrow \mathcal F$ is $(\epsilon, \delta)$-differentially private if,
  for all neighboring datasets $D, D' \in \mathcal D$ and all
  $S \subseteq \mathcal F$ in the range of~$\cA$:
  \begin{align*}
    \prob{\cA(D) \in S} \le e^\epsilon \cdot \prob{\cA(D') \in S} + \delta \enspace.
  \end{align*}
\end{definition}
In this paper, we consider the classic central model of DP, where a trusted
curator has access to the raw dataset and releases a model trained on this
dataset\footnote{In fact, our privacy guarantees hold even if all
  intermediate iterates are released (not just the final model).}.

A common principle for releasing a private estimate of a function
$h : \mathcal D \rightarrow \RR^p$ is to perturb it with noise.
To ensure privacy, the noise is scaled with the
sensitivity $\Delta_q(h) = \sup_{D \sim D'} \norm{h(D) - h(D')}_q$ of
$h$, with $q=1$ for Laplace, and $q=2$ for Gaussian mechanism.
In coordinate descent methods, we release coordinate-wise
gradients. The $j$-th coordinate of a loss function's gradient
$\nabla_j\ell:\RR^p\rightarrow\RR$ has sensitivity
$\Delta_1(\nabla_j f) = \Delta_2(\nabla_j f)$ ($\nabla_j f$ is a
scalar). For $L$-component-Lipschitz losses, these sensitivities are
upper bounded by $2L_j$ \citep{mangold2021Differentially}.

In our algorithm, we will also need to compute the index of the
gradient's maximal entry privately. To this end, we use the
report-noisy-argmax mechanism \citep{dwork2013Algorithmic}. This
mechanism perturbs each entry of a vector with Laplace noise,
calibrated to its \emph{coordinate-wise} sensitivities, and releases
the index of a maximal entry of this noisy vector. Revealing only this
index allows to greatly reduce the noise, in comparison to
releasing the full gradient. This will be the cornerstone of our
greedy algorithm.

\section{Private Greedy CD}
\label{sec:priv-greedy-coord}

In this section, we present our main contribution: the differentially
private greedy coordinate descent algorithm (DP-GCD). As described in
Section~\ref{sec:algorithm}, DP-GCD updates only one coordinate per iteration,
which is selected greedily as the (approximately) largest entry of the
gradient so as to maximize the improvement in utility at each iteration. We establish
privacy (Section~\ref{sec:privacy}) and utility (Section~\ref{sec:utility})
guarantees for DP-GCD. We further show in \Cref{sec:fast-init-conv} that
DP-GCD enjoys improved utility for high-dimensional problems with a
\emph{quasi-sparse} solution (\ie with a fraction of the parameters dominating
the others). We then provide a proximal extension of DP-GCD to non-smooth
problems (\Cref{sec:proximal-dp-gcd}) and conclude with a discussion of
DP-GCD's computational complexity in \Cref{sec:computational-cost}.

\subsection{The Algorithm}
\label{sec:algorithm}

At each iteration, DP-GCD
(\Cref{algo:private-greedy-coordinate-descent}) updates the parameter
with the greatest gradient value (rescaled by the inverse square root
of the coordinate-wise smoothness constant). This corresponds to the
Gauss-Southwell-Lipschitz rule \citep{nutini2015Coordinate}. To
guarantee privacy, this selection is done using the report-noisy-max
mechanism \citep{dwork2013Algorithmic} with noise scales $\lambda_j'$
along $j$-th entry ($j\in[p]$). DP-GCD then performs a gradient step
with step size $\gamma_j > 0$ along this direction. The gradient is
privatized using the Laplace mechanism \citep{dwork2013Algorithmic}
with scale $\lambda_j$.

\begin{remark}[Sparsity of iterates]
  \label{rem:sparse_iterates}
  When initialized at $w^0=0$, DP-GCD generates sparse iterates. Since it
  chooses its updates greedily, this gives a screening ability to the
  algorithm \citep{fang2020Greed}. We discuss the implications of this
  property in Section~\ref{sec:fast-init-conv}, where we show that DP-GCD's
  utility is improved when the problem's solution is (quasi-)sparse.
\end{remark}

\subsection{Privacy Guarantees}
\label{sec:privacy}

The privacy guarantees of DP-GCD depends on the noise scales $\lambda_j$ and
$\lambda_j'$. In
\Cref{thm:greedy-cd-privacy}, we describe how to set these values so
as to ensure that DP-GCD is
$(\epsilon,\delta)$-differentially private.

\begin{theorem}
  \label{thm:greedy-cd-privacy}
  Let $\epsilon, \delta \in (0,
    1]$. \Cref{algo:private-greedy-coordinate-descent} with
  $\lambda_j = \lambda_j' = \frac{8L_j}{n\epsilon} \sqrt{ T
      \log(1/\delta)} $ is $(\epsilon,\delta)$-DP.
\end{theorem}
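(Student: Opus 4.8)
The plan is to decompose each iteration of \Cref{algo:private-greedy-coordinate-descent} into its two elementary private releases---the coordinate selection $j_t$ on \cref{algo:private-greedy-coordinate-descent:choice_coord} (a report-noisy-max step) and the perturbed gradient entry released on \cref{algo:private-greedy-coordinate-descent:update} (a Laplace mechanism)---bound the privacy of each, and then compose over the $2T$ releases produced during $T$ iterations. The common building block is the sensitivity of a coordinate of the empirical gradient: since $f = \frac1n\sum_{i}\ell(\cdot;d_i)$ and each loss is $L$-component-Lipschitz, changing a single sample moves $\nabla_j f$ by at most $\frac{2L_j}{n}$, so $\Delta_1(\nabla_j f)=\Delta_2(\nabla_j f)\le \frac{2L_j}{n}$. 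Note that $w^t$ is computed from previously released quantities, so by post-processing its value may be treated as fixed when analysing iteration $t$, and the whole argument is adaptive.

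For the gradient release, perturbing $\nabla_{j_t} f(w^t)$ (sensitivity $\frac{2L_{j_t}}{n}$) with $\Lap(\lambda_{j_t})$ yields, conditionally on $j_t$, a pure $\epsilon_0$-DP guarantee with $\epsilon_0 = \frac{2L_{j_t}/n}{\lambda_{j_t}} = \frac{\epsilon}{4\sqrt{T\log(1/\delta)}}$. For the selection step, I would first absorb the rescaling: the score $\frac{\abs{\nabla_j f(w^t)+\chi_j^t}}{\sqrt{M_j}}$ has effective sensitivity $\frac{2L_j}{n\sqrt{M_j}}$ and effective noise scale $\frac{\lambda_j'}{\sqrt{M_j}}$, whose ratio $\frac{2L_j/n}{\lambda_j'}=\frac{\epsilon}{4\sqrt{T\log(1/\delta)}}$ is \emph{independent of $j$}. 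The absolute value is handled by writing $\abs{x}=\max(x,-x)$ and viewing the argmax as taken over the $2p$ signed, noise-perturbed scores (mapping the winning index back to its coordinate). Invoking the report-noisy-max guarantee with per-coordinate noise calibrated to per-coordinate sensitivity (as in \citet{mangold2021Differentially}) then shows the selection is $O(\epsilon_0)$-DP, the key point being that this cost depends only on the uniform signal-to-noise ratio and \emph{not} on $p$.

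It remains to compose. Each of the $T$ selections and $T$ gradient releases is a pure-DP mechanism of strength $\epsilon_0=\Theta\!\big(\epsilon/\sqrt{T\log(1/\delta)}\big)$, chosen adaptively, so I would apply the advanced composition theorem to the $2T$ mechanisms: its first-order term $\epsilon_0\sqrt{4T\log(1/\delta)}$ contributes a constant fraction of $\epsilon$, while the quadratic correction $\sum_t \epsilon_0(e^{\epsilon_0}-1) = O(\epsilon^2/\log(1/\delta))$ is lower order for $\epsilon\le1$, and the total failure probability is exactly the $\delta$ introduced by advanced composition. The factor $8$ in $\lambda_j,\lambda_j'$ is precisely what leaves enough slack for both terms to sum to at most $\epsilon$.

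The main obstacle is the selection step: everything else is calibration and standard composition, but the report-noisy-max guarantee must be established for a \emph{two-sided} ($\abs{\cdot}$) argmax whose coordinates carry \emph{distinct} noise scales $\lambda_j'$. The saving grace---and what makes the bound dimension-free in $p$---is that after normalising by $\sqrt{M_j}$ the per-coordinate signal-to-noise ratios coincide, which is exactly the regime in which report-noisy-max enjoys a clean, $p$-independent privacy cost; I would check that the split-absolute-value reduction preserves this uniformity before appealing to the max-mechanism analysis.
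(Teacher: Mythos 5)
Your decomposition and accounting are exactly the paper's: each iteration is split into a report-noisy-max selection and a Laplace release of one gradient entry, each calibrated to the per-coordinate sensitivity $2L_j/n$, giving $2T$ adaptive pure-DP releases with per-release budget $\epsilon_0=\epsilon/4\sqrt{T\log(1/\delta)}$, combined via advanced composition \citep[Corollary 3.21 of][]{dwork2013Algorithmic} to produce the constant $8$ in the noise scales. Your treatment is in places more explicit than the paper's own proof (which states the sensitivity as $2L_j$ but implicitly uses $2L_j/n$, and never mentions the $\sqrt{M_j}$ rescaling).

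The genuine gap is the step you flag but defer: the privacy of the selection. The paper commits, in one line, to the selection being \emph{exactly} $\epsilon_0$-DP; you only claim it is $O(\epsilon_0)$-DP, and that difference is not cosmetic. If each selection costs $2\epsilon_0$, the first-order term of (homogeneous) advanced composition over the $2T$ releases is already $\sqrt{4T\log(1/\delta)}\cdot 2\epsilon_0=\epsilon$, so with an unspecified constant inside the $O(\cdot)$ you can only conclude $(C\epsilon,\delta)$-DP for some constant $C$, not the theorem as stated with noise scale $8L_j\sqrt{T\log(1/\delta)}/n\epsilon$. Worse, the ``saving grace'' you propose --- uniform signal-to-noise ratio after the $\sqrt{M_j}$ rescaling --- is not the hypothesis the report-noisy-max argument actually consumes. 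That argument fixes the competitors' noises and shifts the winner's noise by its \emph{own} rescaled sensitivity plus the \emph{largest competitor's} rescaled sensitivity, so the cost for index $j$ is $\epsilon_0\bigl(1+\tilde\Delta_{\max}/\tilde\Delta_j\bigr)$ where $\tilde\Delta_j = L_j/\sqrt{M_j}$. This is dimension-free, so your central point survives, but it is uniform over $j$ only when the $L_j/\sqrt{M_j}$ are homogeneous, and even then it equals $2\epsilon_0$ rather than $\epsilon_0$: gradient coordinates are non-monotone queries (one replaced sample can push the selected coordinate down and a competitor up simultaneously), which is exactly the situation where report-noisy-max loses a factor $2$ relative to the counting-query case. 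Your $\abs{x}=\max(x,-x)$ reduction to $2p$ scores does go through (the winning event is a union of two rays, each of which shifts by the same amount), but it does not remove that factor. Finally, note that \citet{mangold2021Differentially} supplies per-coordinate sensitivities for Laplace/Gaussian gradient perturbation, not a report-noisy-max guarantee with heterogeneous noise scales, so there is no off-the-shelf lemma to invoke here: it has to be proven. To be fair, this is also the soft spot of the paper's own proof --- it asserts the $\epsilon_0$ constant without addressing non-monotonicity or heterogeneity --- but the paper does commit to that exact constant, which is what makes its $8$ come out, whereas your hedge leaves the stated theorem unestablished.
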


\begin{sketchproof}
  (Detailed proof in \Cref{sec-app:proof-privacy})  Let
  $\epsilon' = \epsilon/\sqrt{16T\log(1/\delta)}$. At an iteration
  $t$, data is accessed twice. First, to compute the index $j_t$ of the
  coordinate to update. It is obtained as the index of the largest
  noisy entry of $f$'s gradient, with noise $\Lap(\lambda_j')$. By the
  report-noisy-argmax mechanism, $j_t$ is $\epsilon'$-DP. Second, to
  compute the gradient's $j_t$'s entry, which is released with noise
  $\Lap(\lambda_j)$.%
  The Laplace mechanism ensures that this computation is also
  $\epsilon'$-DP. \Cref{algo:private-greedy-coordinate-descent} is
  thus the $2T$-fold composition of $\epsilon'$-DP mechanisms, and the
  result follows from DP's advanced composition theorem
  \citep{dwork2013Algorithmic}.
\end{sketchproof}

\begin{remark}
  The assumption $\epsilon\in(0,1]$ is only used to give a closed-form
  expression for the noise scales $\lambda, \lambda$'s. In practice,
  we tune them numerically to obtain the desired value of $\epsilon>0$
  by the advanced composition theorem (see
  \cref{thm:greedy-cd-privacy:general-epsilon-value} in
  \Cref{sec-app:proof-privacy}), removing the assumption
  $\epsilon \le 1$.
\end{remark}

Computing the greedy update requires injecting Laplace noise that
scales with the coordinate-wise Lipschitz constants $L_1,\dots,L_p$ of
the loss. These constants are typically smaller than their global
counterpart. This allows DP-GCD to inject less noise on smaller-scaled
coordinates.

\subsection{Utility Guarantees}
\label{sec:utility}

We now prove utility upper bounds for DP-GCD.
We show that in
favorable settings (see discussion below), DP-GCD decreases the
dependence on the dimension from polynomial to logarithmic.
\Cref{thm:greedy-cd-utility} gives asymptotic utility
upper bounds, where~$\widetilde O$ ignores non-significant logarithmic
terms. Complete non-asymptotic results can be found in
\Cref{sec-app:proof-utility}.

\begin{theorem}
  \label{thm:greedy-cd-utility}
  Let $\epsilon, \delta \in (0,1]$. Assume $\ell(\cdot; d)$ is a
  convex and $L$-component-Lipschitz loss function for all
  $d \in \cX$, and $f$ is $M$-component-smooth. Define $\cW^*$ the set
  of minimizers of $f$, and $f^*$ the minimum of $f$. Let
  $w_{priv}\in\mathbb{R}^p$ be the output of
  \Cref{algo:private-greedy-coordinate-descent} with step sizes
  $\gamma_j = {1}/{M_j}$, and noise scales
  $\lambda_1,\dots,\lambda_p$, $\lambda'_1,\dots,\lambda'_p$ set as in
  Theorem~\ref{thm:greedy-cd-privacy} (with $T$ chosen below) to
  ensure $(\epsilon,\delta)$-DP. Then, the following holds for any
  $\zeta\in(0,1]$:
  \begin{enumerate}[leftmargin=12pt]
    \item When $f$ is convex, we define the quantity
          $R_{M,1}=\max_{\scriptstyle w\in\RR^p~}\!\!\max_{\scriptstyle w^*\in
              \cW^*}\! \left\{ \norm{w-w^*}_{M,1} \!\mid\! f(w) \le f(w^0)
            \right\}$. Assume the initial optimality gap is
          $f(w^0) - f^* \ge 16 L_{\max} \sqrt{T\log(1/\delta)
              \log(2Tp/\zeta)}/M_{\min} n \epsilon$, and set
          $T = O(n^{2/3} \epsilon^{2/3} R_{M,1}^{2/3} M_{\min}^{1/3} /
            L_{\max}^{2/3} \log(1/\delta)^{1/3})$. Then with probability at
          least $1-\zeta$,
          \begin{align*}
            f(w_{priv}) - f^*
            = \widetilde O\bigg(\frac{R_{M,1}^{4/3} L_{\max}^{2/3} \log(1/\delta) \log(p/\zeta)}{n^{2/3}\epsilon^{2/3}M_{\min}^{1/3}}\bigg)\enspace.
          \end{align*}
    \item When $f$ is $\mu_{M,1}$-strongly convex
          w.r.t. $\smash{\norm{\cdot}_{M,1}}$, set
          $T = O\left(\frac{1}{\mu_{M,1}} \log(\frac{M_{\min}
                  \mu_{M,1} n \epsilon (f(w^0)-f(w^*))}{L_{\max}
                  \log(1/\delta) \log(2p/\zeta)})\right)$. Then with
          probability at least $1-\zeta$,
          \begin{align*}
            f(w_{priv}) - f^*
            = \widetilde O\bigg(
            \frac{L_{\max}^2 \log(1/\delta) \log({2p}/{\mu_M\zeta})}{M_{\min} \mu_{M,1}^2 n^2 \epsilon^2}
            \bigg)\enspace.
          \end{align*}
  \end{enumerate}%
\end{theorem}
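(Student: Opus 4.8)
The plan is to track the suboptimality $\delta_t = f(w^t) - f^*$ across iterations, derive a single \emph{noisy descent recursion}, and then solve it separately in the convex and strongly convex regimes. Writing $g_t = \nabla_{j_t} f(w^t)$ and $\eta_t = \eta_{j_t}^t$, the update moves coordinate $j_t$ by $-\frac{1}{M_{j_t}}(g_t+\eta_t)$. Substituting this into the $M$-component-smoothness inequality and using the exact step size $\gamma_{j_t}=1/M_{j_t}$, the cross term $g_t\eta_t$ cancels, which is the crucial algebraic simplification:
\[
f(w^{t+1}) \le f(w^t) - \frac{g_t^2}{2M_{j_t}} + \frac{\eta_t^2}{2M_{j_t}}
= f(w^t) - \frac{1}{2}\Big(\frac{\abs{g_t}}{\sqrt{M_{j_t}}}\Big)^2 + \frac{\eta_t^2}{2M_{j_t}} .
\]
The subtracted quantity $\abs{g_t}/\sqrt{M_{j_t}}$ is exactly the selection score of the chosen coordinate, so the next step is to relate it to the true maximum score $\norm{\nabla f(w^t)}_{M^{-1},\infty}$.

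I would obtain this link from the report-noisy-max rule: applying the triangle inequality to the noisy scores and a Laplace tail bound, union-bounded over the $\approx Tp$ selection-noise draws $\chi^t_{j'}$ and the $T$ update-noise draws $\eta^t_{j_t}$ (which produces the $\log(2Tp/\zeta)$ factor), with probability $1-\zeta$ the selected score obeys $\abs{g_t}/\sqrt{M_{j_t}} \ge \norm{\nabla f(w^t)}_{M^{-1},\infty} - 2\Xi$ with $\Xi = \widetilde O\big(L_{\max}\sqrt{T\log(1/\delta)}/n\epsilon\sqrt{M_{\min}}\big)$, while simultaneously each update-noise term satisfies $\eta_t^2/2M_{j_t}\le A$ with $A = \widetilde O\big(L_{\max}^2 T\log(1/\delta)/M_{\min}n^2\epsilon^2\big)$. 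Combining this with the descent inequality (and the elementary bound $(a-b)^2 \ge \tfrac{1}{2}a^2 - b^2$ to absorb the selection error) turns the guarantee into a recursion driven by $\norm{\nabla f(w^t)}_{M^{-1},\infty}^2$ plus the two noise floors.

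The final step is to convert the gradient-norm term into a bound on $\delta_t$. In the strongly convex case, strong convexity w.r.t. $\norm{\cdot}_{M,1}$ gives the dual Polyak–Łojasiewicz inequality $\norm{\nabla f(w^t)}_{M^{-1},\infty}^2 \ge 2\mu_{M,1}\delta_t$, yielding a linear recursion $\delta_{t+1} \le (1-\tfrac{\mu_{M,1}}{2})\delta_t + O(\Xi^2 + A)$; iterating and picking $T = O(\mu_{M,1}^{-1}\log(\cdots))$ balances the geometrically decaying initial term against the noise floor $O((\Xi^2+A)/\mu_{M,1})$, which evaluates to the claimed $\widetilde O\big(L_{\max}^2\log(1/\delta)\log(p/\mu_M\zeta)/M_{\min}\mu_{M,1}^2 n^2\epsilon^2\big)$. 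In the convex case I would instead use the duality bound $f(w^t)-f^* \le \scalar{\nabla f(w^t)}{w^t-w^*} \le \norm{\nabla f(w^t)}_{M^{-1},\infty} R_{M,1}$, giving $\norm{\nabla f(w^t)}_{M^{-1},\infty} \ge \delta_t/R_{M,1}$ and hence a quadratic-decrease recursion $\delta_{t+1} \le \delta_t - \delta_t^2/(2R_{M,1}^2) + O(\Xi^2 + A)$ whose clean part solves to $O(R_{M,1}^2/T)$; optimizing $T$ to trade this against the $T$-growing noise scale produces the stated $n^{-2/3}$ rate.

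The main obstacle I anticipate lies in the convex case. The constant $R_{M,1}$ is defined only over the sublevel set $\{f \le f(w^0)\}$, yet the noisy recursion is not monotone, so I must show the iterates remain in this set up to iteration $T$; this is exactly the purpose of the hypothesis $f(w^0)-f^* \ge \widetilde\Omega\big(L_{\max}\sqrt{T\log(1/\delta)}/M_{\min}n\epsilon\big)$, which keeps the quadratic decrease dominant over the injected noise. Because the guarantee concerns the \emph{last} iterate $w_{priv}=w^T$ rather than the best one, carefully solving $\delta_{t+1}\le \delta_t - c\delta_t^2 + \text{noise}$ and controlling the terminal value while jointly optimizing $T$ is the delicate part; by contrast, once the descent lemma and the selection guarantee are in place, the strongly convex case is a routine linear recursion.
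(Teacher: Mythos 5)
Your proposal follows essentially the same route as the paper's proof: the same smoothness descent lemma with exact step sizes $1/M_j$, the same conversion of the gradient sup-norm into suboptimality (Hölder with $R_{M,1}$ in the convex case, the conjugate-norm strong-convexity inequality $\norm{\nabla f(w^t)}_{M^{-1},\infty}^2 \ge 2\mu_{M,1}(f(w^t)-f^*)$ in the strongly convex case), the same union bound over the $Tp$ iteration--coordinate pairs, and the same recursion-solving strategy in which the initial-gap hypothesis guards against the non-monotonicity of the noisy quadratic-decrease recursion. The only cosmetic difference is that you bound all Laplace draws uniformly up front and propagate the selection error by triangle inequality, whereas the paper keeps the noise terms inside the descent inequality via two applications of $-a^2 \le -\tfrac{1}{2}(a+b)^2 + b^2$ and applies a concentration lemma to the sum of squared Laplace variables at the end; both give the same $\log(Tp/\zeta)$ factors and the same rates.
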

\begin{sketchproof}(Detailed proof in \Cref{sec-app:proof-utility}). %
  We start by proving a noisy ``descent lemma''.  Since $f$ is smooth,
  we have
  $f(w^{t+1}) \le f(w^t) - \frac{1}{2M_j} \nabla_j f(w^t)^2 +
  \frac{1}{2M_j} (\eta_j^t)^2$. The greedy selection of $j$ gives that
  $- \tfrac{1}{M_j}(\nabla_j f(w^t) + \chi_j)^2 \le - \norm{\nabla
    f(w^t) + \chi}_{M^{-1},\infty}^2$. We then use the inequality
  $(a+b)^2 \le 2a^2 + 2b^2$ for $a,b\in\RR$, and convexity arguments
  to prove the lemma. When $f$ is convex, we have%
  \begin{align*}
     & f(w^{t+1}) - f(w^*)
    \le f(w^t) - f(w^*)
    - \frac{(f(w^t) - f(w^*))^2}{8 \norm{w^t-w^*}_{M,1}^2}
    + \frac{\abs{\eta_j^t}^2}{2M_j}
    + \frac{ \abs{\chi^t_{j}}^2 }{2M_{j}}
    + \frac{ \abs{\chi^t_{j^*}}^2 }{4M_{j^*}} \enspace.
  \end{align*}
  There, we observe that, at each iteration, either (i) $w^t$ is far
  enough from the optimum, and the value of the objective decreases
  with high probability, either (ii) $w^t$ is close to the optimum,
  then all future iterates remain in a ball whose radius depends on
  the scale of the noise. We prove this key property rigorously in
  \Cref{sec:key-lemma-behavior}.

  When $f$ is $\mu_{M,1}$-strongly-convex \wrt $\norm{\cdot}_{M,1}$,
  we obtain
  \begin{align*}
    f(w^{t+1}) - f(w^*)
     & \le \Big( 1 - \frac{\mu_{M,1}}{4} \Big)
    (f(w^t) - f(w^*))
       + \frac{\abs{\eta_j^t}^2}{2M_j}
    + \frac{ \abs{\chi^t_{j}}^2 }{2M_{j}}
    + \frac{ \abs{\chi^t_{j^*}}^2 }{4M_{j^*}} \enspace,
  \end{align*}
  and the result follows by induction. In both settings, we use
  Chernoff bounds to obtain a high-probability result.
\end{sketchproof}

\begin{remark}
  The lower bound on
  $f(w^0) - f^*$ in Theorem~\ref{thm:greedy-cd-utility}
  is a standard assumption in the analysis
  of inexact coordinate descent methods: it ensures that sufficient
  decrease is possible despite the noise. A similar assumption is made
  by \citet{tappenden2016Inexact}, see Theorem 5.1 therein.
\end{remark}

\paragraph{Discussion of the utility bounds}
One of the key properties of DP-GCD is that its utility is dictated by
$\ell_1$-norm quantities ($R_{M,1}$ and $\mu_{M,1}$). Remarkably, this
arises without enforcing any $\ell_1$ constraint in the problem, which is in
stark contrast with
private Frank-Wolfe algorithms (DP-FW) that
require such constraints
\citep{talwar2015Nearly,asi2021Private,bassily2021NonEuclidean}. To better
grasp the implications of this, we discuss our results in two
regimes considered in previous work (see Section~\ref{sec:related-works}): (i) when these
$\ell_1$-norm quantities are bounded (similarly to DP-FW algorithms),
and (ii) when their
$\ell_2$-norm counterparts are bounded (similarly to DP-SGD-style
algorithms).

\textit{Bounded in $\ell_1$-norm.} When $R_{M,1}$ and $\mu_{M,1}$ are
$O(1)$, as assumed in prior work on DP-FW
\citep{talwar2015Nearly,asi2021Private,bassily2021NonEuclidean},
DP-GCD's dependence on the dimension is \textit{logarithmic}. For
convex objectives, its utility is $O(\log(p)/n^{2/3}\epsilon^{2/3})$,
matching that of DP-FW and known lower bounds
\citep{talwar2015Nearly}. For strongly-convex problems,
DP-GCD is the first algorithm to achieve a $O(\log(p)/n^2\epsilon^2)$
utility. Indeed, the only competing result in this setting, due to
\citet{asi2021Private}, obtains a worse
 utility of $O(\log(p)^{4/3}/n^{4/3}\epsilon^{4/3})$ by using an
impractical reduction of DP-FW to the convex case. DP-GCD outperforms
this prior result without suffering the extra complexity due to the
reduction.

\textit{Bounded in $\ell_2$-norm.} Consider $R_{M,2}$ and
$\mu_{M,2}$, the $\ell_2$-norm counterparts of $R_{M,1}$ and
$\mu_{M,1}$. Assume that $R_{M,2}$ and
$\mu_{M,2}$ are both $O(1)$, as considered in DP-SGD and its variants \citep{bassily2014Differentially,wang2017Differentially}.
We compare these quantities
using the following inequalities
\citep[see][]{stich2017Approximate,nutini2015Coordinate}:
\begin{align*}
  R_{M,2}^2 \le R_{M,1}^2 \le p R_{M,2}^2\enspace,\enspace\enspace\enspace\enspace
  \tfrac{1}{p}\mu_{M,2} \le \mu_{M,1} \le
  \mu_{M,2}\enspace.
\end{align*}
In the best case of these inequalities, the $O(\log p)$ utility bounds
of the bounded $\ell_1$ norm regime are preserved in the bounded
$\ell_2$ scenario. In the worst case, the utility of DP-GCD becomes
$\widetilde O (p^{2/3}/n^ {2/3}\epsilon^{2/3})$ and
$\widetilde O(p^2/n^2\epsilon^2)$ for convex and strongly-convex
objectives respectively. These worst-case results match DP-FW's
utility in the convex setting (see \eg
\Citet{asi2021Private}), but they do not match DP-SGD's utility.
However, this sheds light on an interesting phenomenon: DP-GCD
\emph{interpolates between $\ell_1$- and $\ell_2$-norm
  regimes}. Indeed, it lies somewhere between the two extreme cases we
just described, depending on how the $\ell_1$- and $\ell_2$-norm
constants compare. Most interestingly, it does so without \textit{a
  priori} knowledge of the problem or explicit constraint on the
domain. Whether there exists an algorithm that yields optimal utility
in all regimes is an interesting open question.

\paragraph{Coordinate-wise regularity} Due to its use of
coordinate-wise step sizes, DP-GCD can adapt to coordinate-wise
imbalance of the objective in the same way as its randomized counterpart,
DP-CD, where coordinates are chosen uniformly at random
\citep{mangold2021Differentially}. This adaptivity notably appears in \Cref
{thm:greedy-cd-utility}
through the measurement of $R_{M,1}$ and $\mu_{M,1}$ relatively to the
scaled norm $\norm{\cdot}_{M,1}$ (as defined in
\Cref{sec:preliminaries}). We refer to
\citep{mangold2021Differentially} for detailed discussion of these
quantities and the associated gains compared to full gradient methods like
DP-SGD.

\subsection{Better Utility on Quasi-Sparse Problems}
\label{sec:fast-init-conv}

In addition to the general utility results presented above, we now exhibit a
specific setting where DP-GCD performs especially
well, namely strongly-convex problems whose solutions
are dominated by a few parameters. We call such vectors quasi-sparse.

\begin{definition}[$(\alpha,\tau)$-quasi-sparsity]
  \label{gcd:quasi-sparsity}
  A vector $w\in\RR^p$ is $(\alpha,\tau)$-quasi-sparse if it has at
  most $\tau$ entries superior to $\alpha$ (in modulus). When
  $\alpha=0$, the vector is called $\tau$-sparse.
\end{definition}
Note that any vector in $\RR^p$ is $
(0,p)$-quasi-sparse, and for any $\tau$ there exists $\alpha>0$ such that the
vector is $(\alpha,\tau)$-quasi-sparse. In fact, $\alpha$ and $\tau$ are linked, and
$\tau(\alpha)$ can be seen as a function of $\alpha$. Of course,
quasi-sparsity will only yield meaningful improvements when $\alpha$
and $\tau$ are small simultaneously.

We now state the main result of
this section, which shows that DP-GCD
(initialized with $w^0=0$) converges to a good approximate solution in few
iterations for problems with quasi-sparse solutions.

\begin{theorem}[Proof in \Cref{sec:fast-init-conv-sup}]
  \label{gcd:fast-initial-convergence}
  Consider $f$ satisfying the hypotheses of \Cref{thm:greedy-cd-utility}, with
  \Cref{algo:private-greedy-coordinate-descent} initialized at
  $w^0=0$. We denote its output $w^T$, and assume that its iterates
  remain $s$-sparse for some $s \le p$.  Assume that $f$ is
  $\mu_{M,2}$-strongly-convex \wrt $\norm{\cdot}_{M,2}$, and that the
  (unique) solution of problem~\eqref{eq:dp-erm} is
  $(\alpha,\tau)$-quasi-sparse for some $\alpha,\tau\ge 0$.  Let
  $0\le T \le p-\tau$ and $\zeta\in[0,1]$. Then with probability at
  least $1 - \zeta$:
  \begin{align*}
    f(w^T) - f^*
     & \le \prod_{t=1}^T \Big(1 - \frac{\mu_{M,2}}{4(\tau+\min(t,s))}\Big) (f(w^0) - f^*)
       + \widetilde O \Big((T+\tau)(p - \tau)\alpha^2
    + \frac{L_{\max}^2T(T+\tau)}{M_{\min} \mu_{M,2} n^2 \epsilon^2}\Big)
    \enspace.
  \end{align*}
  Setting
  $T = \frac{s+\tau}{\mu_{M,2}} \log((f(w^0)-f^*)
  M_{\min}\mu_{M,2}n^2\epsilon^2/L^2)$, and assuming
  $\alpha^2 = O\left( L_{\max}^2 (s+\tau) / M_{\min}\mu_{M,2}^2
    pn^2\epsilon^2 \right)$, we obtain that with probability at least
  $1-\zeta$,
  \begin{align*}
    f(w^T) - f^* = \widetilde O\left( \frac{L_{\max}^2}{M_{\min}}
    \frac{(s+\tau)^2 \log(2p/\zeta)}{\mu_{M,2} n^2\epsilon^2 } \right)
    \enspace.
  \end{align*}
\end{theorem}
Here, strong convexity is measured in $\ell_2$ norm but the dependence
on the dimension is reduced from $p$, the ambient space dimension, to
$(s+\tau)^2$, the \emph{effective dimension of the space where the
  optimization actually takes place}. For high-dimensional sparse
problems, the latter is typically much smaller and yields a large
improvement in utility. Note that it is not necessary for the solution
to be perfectly sparse: it suffices that most of its mass is
concentrated in a fraction of the coordinates. Notably, when
$\alpha^2 = O(L_{\max}^2T/M_{\min}\mu_{M,2}pn^2\epsilon^2)$, the lack
of sparsity is smaller than the noise, and does not affect the
rate. It generalizes the results by \citet{fang2020Greed} for
non-private and sparse settings, that we recover when~$\alpha=0$
and~$\epsilon\rightarrow +\infty$.

In practice, the assumption over the iterates' sparsity is often met with
$s\ll
p$. In
the non-private setting, greedy coordinate descent is known to focus on
coordinates that are non-zero in the solution
\citep{Massias_Gramfort_Salmon17b}: this keeps iterates' sparsity
close to the one of the solution. Furthermore, due to privacy constraints,
DP-GCD will often run for $T \ll p$ iterations. This is especially true in
high-dimensional problems, where the amount of noise required to
guarantee privacy does not allow many iterations (\lcf experiments in
\Cref{sec:experiments-1}).

\subsection{Proximal DP-GCD}
\label{sec:proximal-dp-gcd}

In \Cref{sec:fast-init-conv}, we proved that DP-GCD's utility
is improved when problem's solution is (quasi-)sparse. This motivates
us to consider problems with sparsity-inducing regularization, such as
the $\ell_1$ norm of $w$ \citep{tibshirani1996Regression}.  To tackle
such non-smooth terms, we propose a proximal version of DP-GCD (for
which the same privacy guarantees hold), building upon the multiple
greedy rules that have been proposed for the nonsmooth setting
\citep[see \eg][]{tseng2009Coordinate,nutini2015Coordinate}. We
describe this extension in \Cref{sec:greedy-coord-desc}, and study it
numerically in \Cref{sec:experiments-1}.

\subsection{Computational Cost}
\label{sec:computational-cost}

Each iteration of DP-GCD requires computing a full gradient, but
only uses one of its coordinates. In non-private optimization, one would
generally be better off performing the full update to avoid wasting
computation. This is not
the case when gradients are private. Indeed, using the full gradient
requires privatizing $p$ coordinates, even when only a few of
them may be needed. Conversely, the report noisy max
mechanism
\citep{dwork2013Algorithmic} allows to select these entries
\emph{without paying the full privacy cost of dimension}. Hence, the
greedy updates of DP-GCD reduce the noise needed at the cost of more
computation.

In practice, the higher computational cost of each iteration may not always
translate in a significantly larger cost overall: as
shown by our theoretical results, DP-GCD is able to exploit the
\emph{quasi-sparsity} of the solution to progress fast and only a
handful of iterations may be needed to reach a good private
solution. In contrast, most updates of classic private optimization algorithms
(like DP-SGD) may not be worth doing, and lead to unnecessary injection
of noise. We illustrate this phenomenon numerically in
\Cref{sec:experiments-1}.

\section{Experiments}
\label{sec:experiments-1}

\looseness=-1 In this section, we evaluate the practical performance
of DP-GCD on linear models using the logistic and squared loss with
$\ell_1$ and $\ell_2$ regularization. We compare DP-GCD to two
competitors: differentially private stochastic gradient descent
(DP-SGD) with batch size~$1$
\citep{bassily2014Differentially,abadi2016Deep}, and differentially
private randomized coordinate descent (DP-CD)
\citep{mangold2021Differentially}.  The code is available
online\footnote{\url{https://gitlab.inria.fr/pmangold1/greedy-coordinate-descent}}
and in the supplementary.

\paragraph{Datasets}
The first two datasets, coined \texttt{log1} and \texttt{log2}, are
synthetic.  We generate a design matrix $X\in\RR^{1,000 \times 100}$
with unit-variance, normally-distributed columns. Labels are computed
as $y=Xw^{(true)}+\varepsilon$, where $\varepsilon$ is
normally-distributed noise and $w^{(true)}$ is drawn from a log-normal
distribution of parameters $\mu=0$ and $\sigma=1$ or $2$ %
respectively. This makes $w^{(true)}$ quasi-sparse.
The
\sparse dataset is generated similarly, with $X\in\RR^{1,000\times 1,000}$ and $w^{(true)}$
having only $10$ non-zero values. The \texttt{california} dataset can
be downloaded from \texttt{scikit-learn}
\citep{pedregosa2011Scikitlearn} while \texttt{mtp}, \texttt{madelon}
and \texttt{dorothea} are available in the \texttt{OpenML} repository
\citep{vanschoren2014OpenML}; see summary in
\Cref{tab:datasets-desc}. We discuss the levels of (quasi)-sparsity of
each problem's solution in \Cref{sec:experimental-details}.

\paragraph{Algorithmic setup}
\textit{(Privacy.)} For each
algorithm, the tightest noise scales are computed numerically to
guarantee a suitable privacy level of $(1,1/n^2)$-DP, where $n$ is the number
of records in the dataset.
For DP-CD and DP-SGD, we privatize the gradients with the Gaussian
mechanism \citep{dwork2013Algorithmic}, and account for privacy
tightly using Rényi differential privacy (RDP)
\citep{mironov2017Renyi}. For DP-SGD, we use RDP amplification for the
subsampled Gaussian mechanism \citep{mironov2019Renyi}.

\textit{(Hyperparameters.)} For DP-SGD, we use constant step sizes and standard
gradient clipping \citep{abadi2016Deep}. For DP-GCD and DP-CD, we set
the step sizes to $\eta_j=\tfrac{\gamma}{M_j}$, and adapt the coordinate-wise
clipping thresholds from one hyperparameter, as proposed by
\citet{mangold2021Differentially}.
For each algorithm, we thus tune
two hyperparameters: one step-size and one clipping threshold; see also
\Cref{sec:experimental-details}.

\textit{(Plots.)} In all experiments, we plot the relative error to
the \emph{non-private} optimal objective value for the best set of
hyperparameters (averaged over $5$ runs), as a
function of the number of passes on the data. Each pass
corresponds to $p$ iterations of DP-CD, $n$ iterations of DP-SGD and $1$
iteration of DP-GCD. This guarantees the same amount of
computation for each algorithm, for each x-axis tick.

\begin{table*}
  \centering
  \caption{Number of records and features in each dataset.}
  \label{tab:datasets-desc}
  {%
    \begin{tabular}{ccccccc}
      \toprule
               & \texttt{log1}, \texttt{log2} & \sparse & \texttt{mtp} & \texttt{dorothea} & \texttt{california} & \texttt{madelon} \\
      \midrule
      Records  & $1,000$                      & $1,000$         & $4,450$      & $800$           & $20,640$            & $2,600$          \\
      Features & $100$                        & $1,000$         & $202$        & $88,119$        & $8$                 & $501$            \\
      \bottomrule
    \end{tabular}
  }
\end{table*}

\begin{figure*}[t]
  \captionsetup[subfigure]{justification=centering}
  \centering
  \begin{subfigure}{0.04\linewidth}
    \centering
    \vspace{-1.5em}
    \includegraphics[width=\linewidth]{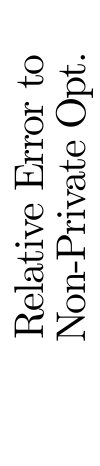}
    \begin{minipage}{.1cm}
      \vfill
    \end{minipage}
  \end{subfigure}%
  \begin{subfigure}{0.235\linewidth}
    \centering
    \includegraphics[width=\linewidth]{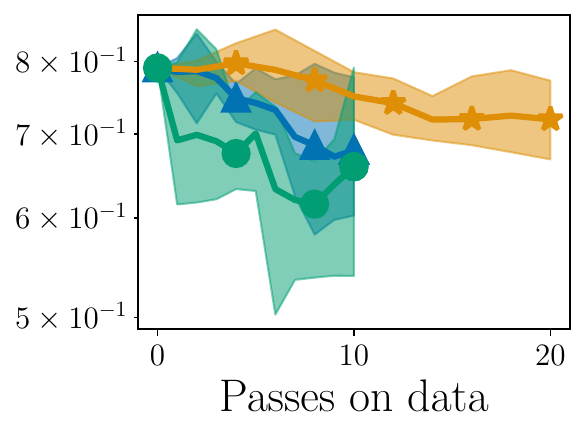}
    \caption{\texttt{log1} \\ Logistic + L2 ($\lambda=1\text{e-}3$)}
    \label{fig:expe-skewed-1}
  \end{subfigure}%
  \begin{subfigure}{0.235\linewidth}
    \centering
    \includegraphics[width=\linewidth]{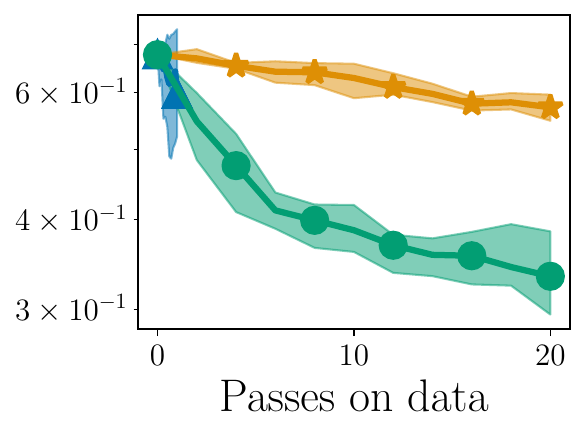}
    \caption{\texttt{log2} \\ Logistic + L2 ($\lambda=1\text{e-}3$)}
    \label{fig:expe-skewed-2}
  \end{subfigure}%
  \begin{subfigure}{0.235\linewidth}
    \centering
    \includegraphics[width=\linewidth]{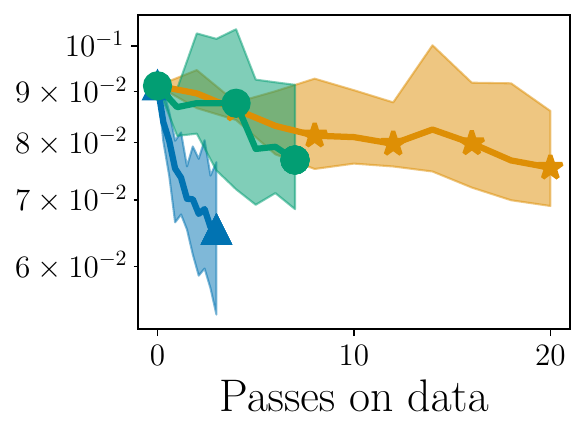}
    \caption{\texttt{mtp}\\ LS + L2 ($\lambda=5\text{e-}8$)}
    \label{fig:expe-mtp}
  \end{subfigure}%
  \begin{subfigure}{0.235\linewidth}
    \centering
    \includegraphics[width=\linewidth]{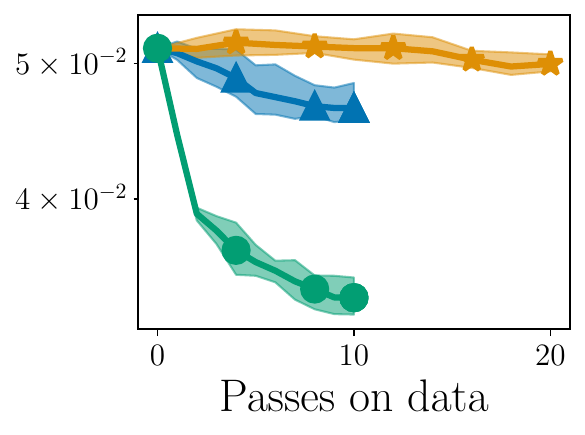}
    \caption{\texttt{madelon} \\ Logistic + L2 ($\lambda=1$)}
    \label{fig:expe-madelon-l2}
  \end{subfigure}%

  \captionsetup[subfigure]{justification=centering}
  \centering
  \begin{subfigure}{0.04\linewidth}
    \centering
    \vspace{-1.5em}
    \includegraphics[width=\linewidth]{plots/xlegend.pdf}
    \begin{minipage}{.1cm}
      \vfill
    \end{minipage}
  \end{subfigure}%
  \begin{subfigure}{0.235\linewidth}
    \centering
    \includegraphics[width=\linewidth]{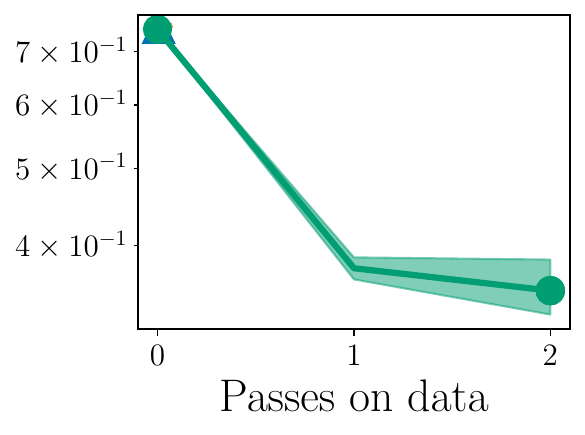}
    \caption{\sparse \\ LASSO ($\lambda=30$)}
    \label{fig:expe-sparse-lasso}
  \end{subfigure}%
  \vspace{-0.1cm}
  \begin{subfigure}{0.235\linewidth}
    \centering
    \includegraphics[width=\linewidth]{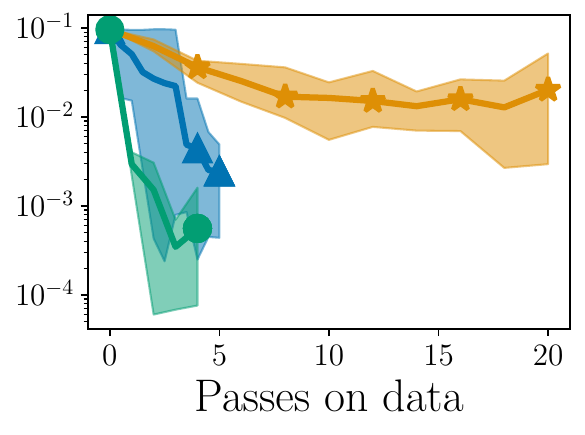}
    \caption{\texttt{california}\\ LASSO  ($\lambda=0.1$)}
    \label{fig:expe-california}
  \end{subfigure}%
  \begin{subfigure}{0.235\linewidth}
    \centering
    \includegraphics[width=\linewidth]{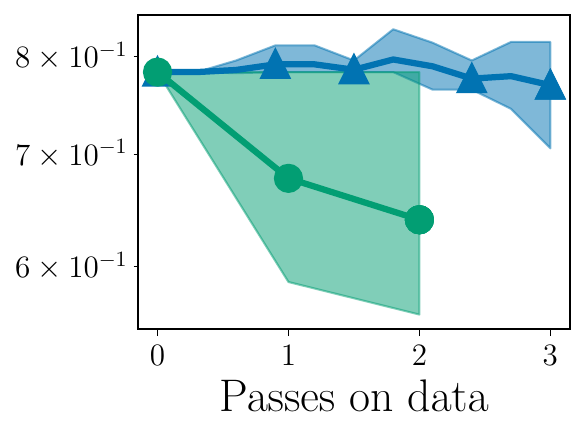}
    \caption{\texttt{dorothea}\\ Logistic + L1  ($\lambda=0.01$)}
    \label{fig:expe-dorothea}
  \end{subfigure}%
  \begin{subfigure}{0.235\linewidth}
    \centering
    \includegraphics[width=\linewidth]{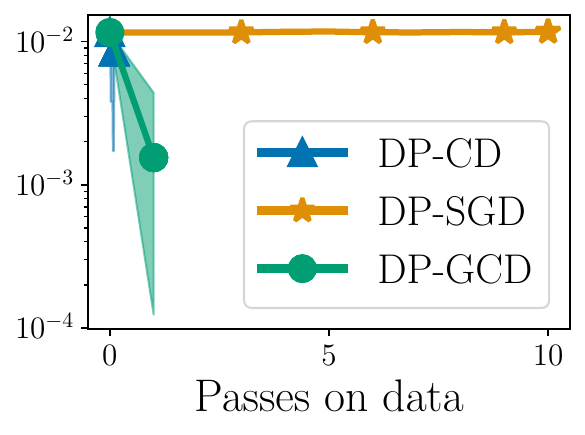}
    \caption{\texttt{madelon}\\ Logistic + L1  ($\lambda=0.05$)}
    \label{fig:expe-madelon-l1}
  \end{subfigure}%

  \caption{\looseness=-1 Relative error (min/mean/max over 5~runs) to
  non-private
  optimal for DP-GCD (our approach) versus DP-CD and
    DP-SGD. On the x-axis, $1$ tick represents
    a full access to the data: $p$ iterations of DP-CD, $n$ iterations
    of DP-SGD and $1$ iteration of DP-GCD. Number of iterations,
    clipping thresholds and step sizes are tuned simultaneously for
    each algorithm.}
  \label{fig:expe-nice}
\end{figure*}

\paragraph{DP-GCD exploits problem structure}
In the higher-dimensional datasets \sparse and \dorothea, where
$p \ge n$, DP-GCD is the only algorithm that manages to do multiple
iterations and to decrease the objective value (see
\Cref{fig:expe-sparse-lasso,fig:expe-dorothea}). In both problems,
solutions are sparse due to the $\ell_1$ regularization. This shows
that DP-GCD's greedy selection of updates can exploit this property
to find relevant non-zero coefficients (see
\Cref{tab:finding-of-support} in \Cref{sec:experimental-details}),
even when this selection is noisy.  The lower-dimensional datasets
\logg, \loggg and \madelon (where $p < n$) are still too high
dimensional (relatively to $n$) for DP-SGD and DP-CD to make
significant progress. In contrast, DP-GCD exploits the fact that
solutions are quasi-sparse to find good approximate solutions quickly
(see
\Cref{fig:expe-skewed-1,fig:expe-skewed-2,fig:expe-madelon-l2,fig:expe-sparse-lasso,fig:expe-dorothea,fig:expe-madelon-l1}).
On the low-dimensional dataset \california, DP-GCD is roughly on par
with DP-SGD and DP-CD (see \Cref{fig:expe-california}). This is due to
the additional noise term introduced by the greedy selection rule: in
such setting, the lower number of iterations does not compensate for
this as much as in higher-dimensional problems. A similar phenomenon
arise in \mtp (\Cref{fig:expe-mtp}), whose solution is not imbalanced
enough for DP-GCD to be superior to its competitors.

\paragraph{Computational complexity}
As discussed in \Cref{sec:computational-cost}, one iteration of
DP-GCD requires a full pass on the data. This is as costly as
$p$ iterations of DP-CD or $n$ iterations of DP-SGD. Nonetheless, on
many problems, DP-GCD requires just as many passes on the data as
DP-CD and DP-SGD
(\Cref
{fig:expe-skewed-1,fig:expe-mtp,fig:expe-madelon-l2,fig:expe-sparse-lasso,fig:expe-california}). When more computation is required, it also provides significantly better solutions than DP-CD and DP-SGD
(\Cref{fig:expe-skewed-2}). %
This is in line with our theoretical results from \Cref{sec:fast-init-conv}.

\section{Conclusion and Discussion}
\label{sec:conclusion-and-discussion}

We proposed DP-GCD, a greedy coordinate descent algorithm for
DP-ERM. In favorable settings, DP-GCD achieves utility guarantees of
$O(\log(p)/n^{2/3}\epsilon^{2/3})$ and $O(\log(p)/n^2\epsilon^2)$ for
convex and strongly-convex objectives. It is the first algorithm to
achieve such rates without solving an $\ell_1$-constrained
problem. Instead, we show that DP-GCD depends on $\ell_1$-norm
quantities and automatically adapts to the structure of the
problem. Specifically, DP-GCD interpolates between logarithmic
and polynomial dependence on the dimension, depending on the
problem. Thus, DP-GCD constitutes a step towards the design of an
algorithm that adjusts to the appropriate $\ell_p$ structure of a
problem \citep[see][]{bassily2021NonEuclidean,asi2021Private}.

We also showed that DP-GCD adapts to the quasi-sparsity of the
problem, without requiring \textit{a priori} knowledge about it. In such
problems, it converges to a good approximate
solution in few iterations. This improves utility, and reduces the
polynomial dependence on the dimension to a polynomial dependence on
the (much smaller) quasi-sparsity level of the solution.

We also proposed and evaluated a proximal variant of DP-GCD, allowing
non-smooth, sparsity-inducing regularization. While it is not covered by our
utility guarantees, we note that the only existing analysis of such variants
in the non-private setting is the one of \citet{karimireddy2019Efficient}
for $\ell_1$ and box constraints. Their proof relies on an
alternation between \good (that provably progress) and \bad steps
(that do not increase the objective), which does not transfer to the private
setting. Extending such results to DP-ERM is an exciting
direction for future work.

\section*{Acknowledgments}

The authors would like to thank the anonymous reviewers who provided
useful feedback on previous versions of this work, which helped to improve
the paper.

\looseness=-1 This work was supported by the Inria Exploratory Action FLAMED
and by the French National Research Agency (ANR) through grant ANR-20-CE23-0015
(Project PRIDE), ANR-20-CHIA-0001-01 (Chaire IA CaMeLOt) and ANR 22-PECY-0002 IPOP
(Interdisciplinary Project on Privacy) project of the Cybersecurity PEPR.

\printbibliography

\appendix
\onecolumn

\section{Proof of Privacy}
\label{sec-app:proof-privacy}

\begin{restate-theorem}{\ref{thm:greedy-cd-privacy}}
  Let $\epsilon, \delta \in (0,
    1]$. \Cref{algo:private-greedy-coordinate-descent} with
  $\lambda_j = \lambda_j' = \frac{8L_j}{n\epsilon} \sqrt{ T
      \log(1/\delta)} $ is $(\epsilon,\delta)$-DP.
\end{restate-theorem}

\begin{proof}
  In each iteration of \Cref{algo:private-greedy-coordinate-descent},
  the data is accessed twice: once to choose the coordinate and once
  to compute the private gradient. In total, data is thus queried $2T$
  times.

  Let $\lambda_j = \lambda_j' = \frac{2L_j}{n\epsilon'}$.  For
  $j\in[p]$, the gradient's $j$-th entry has sensitivity $2L_j$. Thus,
  by the report noisy max mechanism \citep{dwork2013Algorithmic}, the
  greedy choice of $j$ is $\epsilon'$-DP. By the Laplace mechanism
  \citep{dwork2013Algorithmic}, computing the corresponding gradient
  coordinate is also $\epsilon'$-DP.

  The advanced composition theorem for differential privacy thus
  ensures that the $2T$-fold composition of these mechanisms is
  $(\epsilon, \delta)$-DP for $\delta>0$ and
  \begin{align}
    \label{thm:greedy-cd-privacy:general-epsilon-value}
    \epsilon=\sqrt{4T\log(1/\delta)}\epsilon' + 2T\epsilon'
  (\exp(\epsilon') - 1)\enspace,
  \end{align}
  where we recall that
  $\epsilon' = \frac{2L_j}{n\lambda_j} = \frac{2L_j}{n\lambda_j'}$ for
  all $j\in[p]$. When $\epsilon \le 1$, we can give a simpler
  expression \citep[see Corollary 3.21 of][]{dwork2013Algorithmic}:
  with $\epsilon' = {\epsilon}/{4\sqrt{T\log(1/\delta)}}$,
  \Cref{algo:private-greedy-coordinate-descent} is
  $(\epsilon,\delta)$-DP for
  $\lambda_j = \lambda_j' = 8L_j\sqrt{T\log(1/\delta)}/{n\epsilon}$.
\end{proof}

\section{Proof of Utility}
\label{sec-app:proof-utility}

In this section, we prove Theorem~\ref{thm:greedy-cd-utility} and
Theorem~\ref{gcd:fast-initial-convergence}, giving
utility upper bounds for DP-GCD. We obtain these high-probability
results through a careful examination of the properties of DP-GCD's
iterates, and obtain high-probability results by using concentration
inequalities (see \Cref{sec:technical-lemmas}).

In \Cref{sec:descent-lemma}, we prove a general descent lemma, which
implies that iterates of DP-GCD converge (with high probability) to a
neighborhood of the optimum. This property is proven rigorously in
\Cref{sec:key-lemma-behavior}, and we give the utility results for
general convex functions in \Cref{sec:conv-util-result}. Under the
additional assumption that the objective is strongly convex, we prove
better utility bounds in \Cref{sec:util-strongly-conv}. These bounds
follow from a key lemma (see \Cref{sec:key-ineq-strongly}), which
implies linear convergence to a neighborhood of the optimum. We then
use this result in two settings, obtaining two different rates: first
in a general setting (in \Cref{sec:gener-util-result}), then under the
additional assumption that the problem's solution is quasi-sparse (in
\Cref{sec:fast-init-conv-sup}).

\subsection{Concentration Lemma}
\label{sec:technical-lemmas}

To prove high-probability utility results, we first bound (in
\Cref{dp-gcd:techical-lemma-chernoff}) the probability for a sum of
squared Laplacian variables to exceed a given threshold.

\begin{lemma}
  \label{dp-gcd:techical-lemma-chernoff}
  Let $K>0$ and $\lambda_1, \dots, \lambda_K > 0$. Define
  $X_k \sim \Lap(\lambda_k)$ and
  $\lambda_{\max}=\max_{k\in[K]} \lambda_k$. For any $\beta > 0$, it holds that
  \begin{align}
    \prob{\sum_{k=1}^K X_k^2 \ge \beta}
    \le 2^K  \exp\left( - \frac{\sqrt{\beta}}{2\lambda_{\max}} \right)
    \enspace.
  \end{align}
\end{lemma}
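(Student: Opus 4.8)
The plan is to reduce the quadratic tail event to a linear one and then apply a Chernoff bound exploiting the exponential tails of the $\abs{X_k}$. First I would note that, writing $X=(X_1,\dots,X_K)$, one has $\norm{X}_{I,2}\le\norm{X}_{I,1}$, so the event $\{\sum_{k=1}^K X_k^2\ge\beta\}$ is contained in $\{\sum_{k=1}^K \abs{X_k}\ge\sqrt{\beta}\}$, whence
\[
  \prob{\sum_{k=1}^K X_k^2 \ge \beta}
  \le \prob{\sum_{k=1}^K \abs{X_k} \ge \sqrt{\beta}}\enspace.
\]
This step is exactly what turns the $\beta$ of the statement into $\sqrt{\beta}$, trading a hard-to-control sum of squares for a sum of independent nonnegative variables amenable to a moment generating function argument.

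Next I would use that, since $X_k\sim\Lap(\lambda_k)$ is symmetric, $\abs{X_k}$ is exponentially distributed with rate $1/\lambda_k$; its moment generating function is $\mathbb{E}[e^{s\abs{X_k}}]=(1-s\lambda_k)^{-1}$ for every $0\le s<1/\lambda_k$. Applying Markov's inequality to $e^{s\sum_k \abs{X_k}}$ together with the independence of the $X_k$ gives, for any admissible $s$,
\[
  \prob{\sum_{k=1}^K \abs{X_k} \ge \sqrt{\beta}}
  \le e^{-s\sqrt{\beta}} \prod_{k=1}^K \frac{1}{1-s\lambda_k}\enspace.
\]

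Finally I would set $s=1/(2\lambda_{\max})$. This is admissible because $\lambda_k\le\lambda_{\max}$ implies $s\lambda_k\le 1/2$, so that $s<1/\lambda_k$; moreover each factor then satisfies $(1-s\lambda_k)^{-1}\le 2$, giving $\prod_{k=1}^K (1-s\lambda_k)^{-1}\le 2^K$, while $e^{-s\sqrt{\beta}}=\exp(-\sqrt{\beta}/2\lambda_{\max})$. Substituting these two bounds yields the claimed inequality. There is no genuine obstacle here: the only point requiring care is the choice of $s$, which must be small enough to keep every $(1-s\lambda_k)^{-1}$ finite and uniformly bounded (this is precisely what produces the factor $2^K$) yet large enough to give the stated exponential rate. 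The value $s=1/(2\lambda_{\max})$ balances these competing requirements, and the (implicit) independence of the coordinate noises $X_k$ is what lets the moment generating function factorize over $k$.
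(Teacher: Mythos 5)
Your proof is correct and follows essentially the same route as the paper's: the same reduction of $\sum_k X_k^2 \ge \beta$ to $\sum_k \abs{X_k} \ge \sqrt{\beta}$, the same Chernoff/moment-generating-function argument using independence, and the same choice $s = 1/(2\lambda_{\max})$ yielding the factor $2^K$. The only cosmetic difference is that you quote the exponential MGF directly where the paper computes the integral explicitly.
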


\begin{proof}
  We first remark that
  $(\sum_{k=1}^K \abs{X_k})^2 = \sum_{k=1}^K \sum_{k'=1}^K
  \abs{X_k}\abs{X_{k'}} \ge \sum_{k=1}^K X_k^2$. Therefore
  \begin{align}
    \prob{\sum_{k=1}^K X_k^2 \ge a^2}
    & \le \prob{\Big(\sum_{k=1}^K \abs{X_k}\Big)^2 \ge a^2}
      = \prob{\Big(\sum_{k=1}^K \abs{X_k}\Big) \ge a}
      \enspace.
  \end{align}
  Chernoff's inequality now gives, for any $\gamma > 0$,
  \begin{align}
    \prob{\sum_{k=1}^K \abs{X_k} \ge a}
    & \le \exp(-\gamma a) \expec{}{ \exp(\gamma \sum_{k=1}^K\abs{X_k})}
      \label{dp-gcd:techical-lemma-chernoff:full-ineq}
      \enspace.
  \end{align}
  By the properties of the exponential and the $X_k$'s independence,
  we can rewrite the inequality as
  \begin{align}
    \prob{\sum_{k=1}^K \abs{X_k} \ge a}
    \le \exp(-\gamma a) \expec{}{ \prod_{k=1}^K  \exp\Big(\gamma \abs{X_k}\Big)}
    = \exp(-\gamma a) \prod_{k=1}^K\expec{}{\exp\Big(\gamma \abs{X_k}\Big)}
    \enspace.
  \end{align}
  We can now compute the expectation of $\exp(\gamma\abs{X_k})$ for $k\in[K]$,
  \begin{align}
    \expec{}{\exp\Big(\gamma \abs{X_k}\Big)}
    & = \frac{1}{2\lambda_k}
      \int_{-\infty}^{+\infty}\! \exp(\gamma \abs{x}) \exp(-\frac{\abs{x}}{\lambda_k}) dx
     = \frac{1}{\lambda_k}
      \int_{0}^{+\infty}\! \exp\Big((\gamma-\frac{1}{\lambda_k}) x\Big) dx
      \enspace.
  \end{align}
  We choose $\gamma = 1/2\lambda_{\max}$, such that
  $\gamma \le 1/2\lambda_k$ for all $k \in [K]$ and obtain
  \begin{align}
    \expec{}{\exp\Big(\gamma \abs{X_k}\Big)}
    & = \frac{1}{\lambda_k} \frac{1}{\frac{1}{\lambda_k} - \gamma}
      = \frac{1}{1-\gamma \lambda_k}
      \le 2
      \enspace.
  \end{align}
  Plugging everything together, we have proved that
  \begin{align}
    \prob{\sum_{k=1}^K X_k^2 \ge a^2}
    \le \prob{\sum_{k=1}^K \abs{X_k} \ge a}
    \le 2^K \exp(-\frac{a}{2\lambda_{\max}})
      \enspace,
  \end{align}
  and taking $a = \sqrt{\beta}$ gives the result.
\end{proof}

\subsection{Descent Lemma}
\label{sec:descent-lemma}

We now prove a noisy descent lemma for DP-GCD
(\Cref{gcd-utility-proof:descent-lemma}). This lemma bounds the
suboptimality $f(w^{t+1})-f(w^*)$ at time $t+1$ as a function of the
suboptimality $f(w^t) - f(w^*)$ at time $t$, of the gradient's largest
entry and of the noise. At this point, we remark that when the
gradient is large enough, it is very probable that
$\frac{1}{8} \norm{\nabla f(w^t)}_{M^{-1},\infty}^2 \ge
\frac{1}{2M_{j}} \abs{\eta_{j}^t}^2 + \frac{1}{2M_{j}}
\abs{\chi_{j}^t}^2 + \frac{1}{4M_{j^*}} \abs{\chi_{j^*}^t}^2$: this
implies that the value of the objective function decreases with high
probability, even under the presence of noise. This observation will
be crucial for proving utility for general convex functions.

\begin{lemma}
  Let $t\ge 0$ and $w^t,w^{t+1}\in\RR^p$ two consecutive iterates of
  \Cref{algo:private-greedy-coordinate-descent}, with $\gamma_j=1/M_j$
  and $\lambda_j,\lambda_j'$ chosen as in \Cref{thm:greedy-cd-privacy}
  to ensure $\epsilon,\delta$-DP. We denote by $j\in[p]$ the
  coordinate chosen at this step $t$, and by
  $j^* = \argmax_{j\in[p]}\abs{\nabla_j f(w^t)}/\sqrt{M_j}$ the coordinate
  that would have been chosen without noise. The following inequality
  holds
  \label{gcd-utility-proof:descent-lemma}
  \begin{align}
    f(w^{t+1}) - f(w^*)
    & \le f(w^t) - f(w^*)
      - \frac{1}{8} \norm{\nabla f(w^t)}_{M^{-1},\infty}^2
      + \frac{1}{2M_{j}} \abs{\eta_{j}^t}^2
      + \frac{1}{2M_{j}} \abs{\chi_{j}^t}^2
      + \frac{1}{4M_{j^*}} \abs{\chi_{j^*}^t}^2 \enspace.
  \end{align}
\end{lemma}
\begin{proof}
  The smoothness of $f$ gives a first inequality
  \begin{align}
    f(w^{t+1})
    & \le f(w^t)
      + \scalar{\nabla f(w^t)}{w^{t+1} - w^t}
      + \frac{1}{2} \norm{w^{t+1}-w^t}_M^2 \\
    & = f(w^t)
      - \frac{1}{M_j} \nabla_j f(w^t) (\nabla_j f(w^t) + \eta_j^t)
      + \frac{1}{2M_j} (\nabla_j f(w^t) + \eta_j^t)^2 \\
    & = f(w^t)
      - \frac{1}{M_j} \nabla_j f(w^t)^2
      - \frac{1}{M_j} \nabla_j f(w^t)  \eta_j^t
      + \frac{1}{2M_j} (\nabla_j f(w^t))^2 \nnlq
      + \frac{1}{M_j} \nabla_j f(w^t) \eta_j^t
      + \frac{1}{2M_j} (\eta_j^t)^2 \\
    & = f(w^t)
      - \frac{1}{2M_j} \nabla_j f(w^t)^2
      + \frac{1}{2M_j} (\eta_j^t)^2 \enspace.
      \label{gcd-utility-proof:descent-lemma:smoothness-upper-bound}
  \end{align}

  We will make the noisy gradient appear, so as to use the noisy
  greedy rule. To do so, we remark that the classical inequality
  $(a+b)^2 \le 2a^2 + 2b^2$ for any $a,b\in\RR$ implies that
  $-a^2 \le -\frac{1}{2}(a+b)^2 + b^2$. Applied with
  $a = \nabla_j f(w^t)/\sqrt{M_j}$ and
  $b = \chi_j^t/\sqrt{M_j}$, this results in
  \begin{align}
    - \frac{1}{2M_j} \nabla_j f(w^t)^2
    & \le - \frac{1}{4M_j} (\nabla_j f(w^t) + \chi_j^t)^2
      + \frac{1}{2M_j} (\chi_j^t)^2\enspace.
      \label{gcd-utility-proof:descent-lemma:upper-bound-gradient}
  \end{align}
  And, by the noisy greedy rule,
  $\frac{1}{\sqrt{M_{j^*}}} \abs{\nabla_{j^*} f(w^t) + \chi_{j^*}^t}
  \le \frac{1}{\sqrt{M_j}} \abs{\nabla_{j} f(w^t) + \chi_{j}^t}$. We
  replace
  in~\eqref{gcd-utility-proof:descent-lemma:upper-bound-gradient} and
  use the inequality $-a^2 \le -\frac{1}{2}(a+b)^2 + b^2$ with
  $a=(\nabla_{j^*} f(w^t) + \chi_{j^*})/\sqrt{M_{j^*}}$ and
  $b=-\chi_{j^*}/\sqrt{M_{j^*}}$ to obtain
  \begin{align}
    - \frac{1}{2M_j} \nabla_j f(w^t)^2
    & \le - \frac{1}{4M_{j^*}} (\nabla_{j^*} f(w^t) + \chi_{j^*}^t)^2
      + \frac{1}{2M_j} (\chi_j^t)^2 \\
    & \le - \frac{1}{8M_{j^*}} (\nabla_{j^*} f(w^t))^2
      + \frac{1}{4M_{j^*}} (\chi_{j^*}^t)^2
      + \frac{1}{2M_j} (\chi_j^t)^2\enspace.
  \end{align}
  The result follows
  from~\eqref{gcd-utility-proof:descent-lemma:smoothness-upper-bound}
  and
  $\frac{1}{M_{j^*}} (\nabla_{j^*} f(w^t))^2 = \norm{\nabla
    f(w^t)}_{M^{-1},\infty}^2$.
\end{proof}

\subsection{Utility for General Convex Functions}
\label{sec:util-gener-conv}

In this section, we derive an upper bound on the utility of DP-GCD for
convex objective functions. First, we use convexity of $f$ to upper
bound the decrease described in
\Cref{gcd-utility-proof:descent-lemma}. This gives \Cref
{gcd-utility-proof:descent-lemma-convex} in
\Cref{sec:descent-lemma-convex}, where the suboptimality gap
$f(w^{t+1})-f(w^*)$ at time $t+1$ is upper bound by a function of the
suboptimality gap $f(w^t)-f(w^*)$ at time $t$ and the noise injected
in step $t$. The novelty of our analysis lies in
\Cref{gcd-utility-proof:technical-lemma}, where examine the decrease
of the objective. Specifically, we show that either (i) $f(w^t)$ is
far from its minimum, and the suboptimality gap decreases with high
probability, either (ii) $f(w^t)$ is close to its minimum, then all
future iterates of DP-GCD will remain in a ball whose radius is
determined by the variance of the noise. This observation is essential
for proving the utility results stated in \Cref{sec:utility}.

\subsubsection{Descent Lemma for Convex Functions}
\label{sec:descent-lemma-convex}

\begin{lemma}
  \label{gcd-utility-proof:descent-lemma-convex}
  Under the hypotheses of \Cref{gcd-utility-proof:descent-lemma}, for
  a convex objective function $f$, we have
  \begin{align}
    f(w^{t+1}) - f(w^*)
    & \le f(w^t) - f(w^*)
      - \frac{(f(w^t) - f(w^*))^2}{8\norm{w^t - w^*}_{M,1}^2} \nnlq
      + \frac{1}{2M_{j}} \abs{\eta_{j}^t}^2
      + \frac{1}{2M_{j}} \abs{\chi_{j}^t}^2
      + \frac{1}{4M_{j^*}} \abs{\chi_{j^*}^t}^2 \enspace.
  \end{align}
\end{lemma}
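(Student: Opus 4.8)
The plan is to start directly from \Cref{gcd-utility-proof:descent-lemma}, which already delivers the desired inequality except that the deterministic decrease term appears as $-\tfrac{1}{8}\norm{\nabla f(w^t)}_{M^{-1},\infty}^2$ instead of $-\tfrac{(f(w^t)-f(w^*))^2}{8\norm{w^t-w^*}_{M,1}^2}$. The three noise terms are identical in both statements, so the entire task reduces to producing the lower bound $\norm{\nabla f(w^t)}_{M^{-1},\infty}^2 \ge \tfrac{(f(w^t)-f(w^*))^2}{\norm{w^t-w^*}_{M,1}^2}$ and substituting it in.

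To obtain this, I would first invoke convexity of $f$. The gradient inequality $f(w^*) \ge f(w^t) + \scalar{\nabla f(w^t)}{w^* - w^t}$ rearranges to $f(w^t) - f(w^*) \le \scalar{\nabla f(w^t)}{w^t - w^*}$. Next I would bound this inner product using the fact that $\norm{\cdot}_{M,1}$ and $\norm{\cdot}_{M^{-1},\infty}$ are conjugate with respect to the standard dot product. Writing $\scalar{\nabla f(w^t)}{w^t-w^*} = \sum_{j} \big(M_j^{-1/2}\nabla_j f(w^t)\big)\big(M_j^{1/2}(w^t_j - w^*_j)\big)$ and factoring out the maximum of the first factor yields Hölder's inequality in this weighted pairing, namely $\scalar{\nabla f(w^t)}{w^t-w^*} \le \norm{\nabla f(w^t)}_{M^{-1},\infty}\,\norm{w^t-w^*}_{M,1}$. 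Chaining the two displays gives $f(w^t)-f(w^*) \le \norm{\nabla f(w^t)}_{M^{-1},\infty}\,\norm{w^t-w^*}_{M,1}$.

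Finally I would square this inequality. The observation making the squaring valid is that both sides are nonnegative: $f(w^t)-f(w^*)\ge 0$ since $w^*$ minimizes $f$, and the right-hand side is a product of norms. Squaring and dividing by $\norm{w^t-w^*}_{M,1}^2$ gives $\norm{\nabla f(w^t)}_{M^{-1},\infty}^2 \ge \tfrac{(f(w^t)-f(w^*))^2}{\norm{w^t-w^*}_{M,1}^2}$, hence $-\tfrac{1}{8}\norm{\nabla f(w^t)}_{M^{-1},\infty}^2 \le -\tfrac{(f(w^t)-f(w^*))^2}{8\norm{w^t-w^*}_{M,1}^2}$. Plugging this into the bound of \Cref{gcd-utility-proof:descent-lemma} produces exactly the claimed inequality.

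This argument is essentially routine, and there is no serious obstacle. The only points demanding care are the bookkeeping of the $M_j^{\pm 1/2}$ weights, so that the conjugate pairing of $\norm{\cdot}_{M,1}$ and $\norm{\cdot}_{M^{-1},\infty}$ is applied in the correct direction, and the justification that squaring preserves the inequality (which rests on nonnegativity of both sides). Implicitly the presentation assumes $\norm{w^t-w^*}_{M,1}\neq 0$; the degenerate case $w^t=w^*$ is immediate, since then $\nabla f(w^t)=0$ and $f(w^t)-f(w^*)=0$, so the stated bound holds trivially.
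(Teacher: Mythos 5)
Your proposal is correct and follows essentially the same route as the paper's proof: convexity gives $f(w^t) - f(w^*) \le \scalar{\nabla f(w^t)}{w^t - w^*}$, Hölder's inequality for the conjugate pair $\norm{\cdot}_{M,1}$, $\norm{\cdot}_{M^{-1},\infty}$ bounds the inner product, and squaring/dividing yields the lower bound on $\norm{\nabla f(w^t)}_{M^{-1},\infty}^2$ that is substituted into \Cref{gcd-utility-proof:descent-lemma}. Your explicit justification of the squaring step and your handling of the degenerate case $w^t = w^*$ are minor refinements the paper leaves implicit, but the argument is the same.
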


\begin{proof}
  Since $f$ is convex, it holds that
  \begin{align}
    f(w^*)
    & \ge f(w^t) + \scalar{\nabla f(w^t)}{w^* - w^t}\enspace.
  \end{align}
  After reorganizing the terms, we can upper bound them using Hölder's
  inequality
  \begin{align}
    f(w^t) - f(w^*)
    & \le
      \scalar{\nabla f(w^t)}{w^t - w^*} \\
    & \le
      \norm{\nabla f(w^t)}_{M^{-1},\infty} \norm{w^t - w^*}_{M,1}
      \label{gcd-utility-proof:convex-norm-grad-proof-holder}
      \enspace,
  \end{align}
  where the second inequality holds since $\norm{\cdot}_{M,1}$ and
  $\norm{\cdot}_{M^{-1},\infty}$ are conjugate norms. We now divide
  \eqref{gcd-utility-proof:convex-norm-grad-proof-holder} by
  $\norm{w^t - w^*}_{M,1}$, square it and reorganize to get
  $- \norm{\nabla f(w^t)}_{M^{-1},\infty}^2 \le - \frac{(f(w^t) -
    f(w^*))^2}{\norm{w^t - w^*}_{M,1}^2}$. Replacing in
  \Cref{gcd-utility-proof:descent-lemma} gives the result.
\end{proof}

\subsubsection{Key Lemma on the Behavior of DP-GCD's Iterates}
\label{sec:key-lemma-behavior}

Now that we have an inequality in the form of
\Cref{gcd-utility-proof:descent-lemma-convex}, we prove that iterates
of DP-GCD converge to a vicinity of the optimum. In the general lemma below,
think of
$\xi_t$ as $f(w^t)-f(w^*)$ and of $\beta$ as the variance of the
term. This result will be combined with
\Cref{dp-gcd:techical-lemma-chernoff} to obtain high-probability
bounds.

\begin{lemma}
  \label{gcd-utility-proof:technical-lemma}
  Let $\{c_t\}_{t\ge 0}$ and $\{\xi_t\}_{t\ge 0}$ be two sequences of
  positive values that satisfy, for all $t \ge 0$,
  \begin{align}
    \label{gcd-utility-proof:technical-lemma:assumption-ineq}
    \xi_{t+1} \le \xi_t - \frac{\xi_t^2}{c_t} + \beta,
  \end{align}
  such that if $\xi_{t} \le \xi_0$ then $c_t \le c_0$.  Assume that
  $\beta \le c_0$ and $\xi_0 \ge 2 \sqrt{\beta c_0}$. Then:
  \begin{enumerate}
  \item For all $t>0$, $c_t \le c_0$, and there exists $t^*>0$ such
    that $\xi_{t+1}\le\xi_t$ if $t < t^*$ and
    $\xi_t\le 2\sqrt{\beta c_0}$ if $t \ge t^*$.
  \item For all $t \ge 1$,
    $\xi_t \le \frac{c_0}{t} + 2\sqrt{\beta c_0}$.
  \end{enumerate}
\end{lemma}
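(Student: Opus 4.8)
The plan is to treat $B := 2\sqrt{\beta c_0}$ as an absorbing threshold and $\sqrt{\beta c_0}$ (the fixed point of the deterministic map $\xi \mapsto \xi - \xi^2/c_0 + \beta$) as the natural center of the dynamics. I would first record two elementary one-step facts, both proved directly from \eqref{gcd-utility-proof:technical-lemma:assumption-ineq} together with $c_t \le c_0$ (valid whenever $\xi_t \le \xi_0$): (A) if $\xi_t \ge \sqrt{\beta c_0}$ then $\xi_t^2/c_t \ge \beta$, so the noise is dominated and $\xi_{t+1} \le \xi_t$; (B) if $0 \le \xi_t \le B$ then $\xi_{t+1} \le B$. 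Fact (B) follows by maximizing the concave map $\phi(\xi) = \xi - \xi^2/c_t + \beta$ over $[0,B]$: either its unconstrained maximizer $c_t/2$ lies in $[0,B]$, giving $\phi \le c_t/4 + \beta \le B$ (using $c_t \le 2B$ and $\beta \le \sqrt{\beta c_0} = B/2$, the latter from $\beta \le c_0$), or $\phi$ is increasing on $[0,B]$ and $\phi(B) = B - B^2/c_t + \beta \le B$ since $B^2/c_t \ge 4\beta \ge \beta$.

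These two facts give Part 1. Since every $\xi_t \ge 0$ satisfies either $\xi_t \ge \sqrt{\beta c_0}$ (apply (A)) or $\xi_t < \sqrt{\beta c_0} \le B$ (apply (B)), a one-line induction shows $\xi_{t+1} \le \xi_0$ whenever $\xi_t \le \xi_0$; hence $\xi_t \le \xi_0$ and $c_t \le c_0$ for all $t$. I would then set $t^* = \inf\{t : \xi_t \le B\}$: for $t < t^*$ we have $\xi_t > B > \sqrt{\beta c_0}$, so (A) yields $\xi_{t+1} \le \xi_t$, while for $t \ge t^*$ repeated use of (B) gives $\xi_t \le B$. Finiteness of $t^*$ comes from a uniform per-step decrease: while $\xi_t > B$ one has $\xi_t^2/c_t > 4\beta$, hence $\xi_{t+1} < \xi_t - 3\beta$, which cannot persist since $\xi_t \ge 0$.

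For Part 2, the key device is to recenter at the fixed point. Using $c_t \le c_0$ I first reduce \eqref{gcd-utility-proof:technical-lemma:assumption-ineq} to $\xi_{t+1} \le \xi_t - \xi_t^2/c_0 + \beta$, and then set $\tilde v_t = \xi_t - \sqrt{\beta c_0}$. The additive term cancels exactly (because $(\sqrt{\beta c_0})^2/c_0 = \beta$), and for $\tilde v_t \ge 0$ the recursion collapses to the clean, noise-free form $\tilde v_{t+1} \le \tilde v_t - \tilde v_t^2/c_0$. Taking reciprocals with $\tfrac{1}{1-x} \ge 1+x$ gives $\tfrac{1}{\tilde v_{t+1}} \ge \tfrac{1}{\tilde v_t} + \tfrac{1}{c_0}$ as long as $\tilde v_t \in (0, c_0)$, so telescoping yields $\tilde v_t \le c_0/t$, i.e. $\xi_t \le c_0/t + \sqrt{\beta c_0}$. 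Letting $t_0 = \inf\{t : \xi_t \le \sqrt{\beta c_0}\}$, this covers all $1 \le t < t_0$; for $t \ge t_0$ I invoke fact (B) from Part 1 (since $\sqrt{\beta c_0} \le B$) to get $\xi_t \le B = 2\sqrt{\beta c_0} \le c_0/t + 2\sqrt{\beta c_0}$. Combining the two ranges proves the bound for all $t \ge 1$.

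The main obstacle, and the reason for the shift, is obtaining the sharp constant $c_0/t$ rather than $4c_0/3t$: a direct reciprocal telescoping of $\xi_{t+1} \le \xi_t - \xi_t^2/c_0 + \beta$ forces one to absorb $\beta$ into the quadratic term (via $\beta \le \xi_t^2/4$ near the threshold), costing a factor $3/4$ that the claimed bound cannot afford for small $t$. Recentering at $\sqrt{\beta c_0}$ removes the additive noise exactly and restores the $1/t$ rate. The remaining delicate points are purely bookkeeping: checking that $\tilde v_t$ stays in $(0,c_0)$ over the telescoped range (if ever $\tilde v_t \ge c_0$ then $\tilde v_{t+1} \le 0$, which pushes $t$ past $t_0$) and that $\tilde v_{t+1} > 0$ so that the reciprocal step is legitimate, both of which follow from the definition of $t_0$.
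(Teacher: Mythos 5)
Your proposal is correct and follows essentially the same route as the paper's own proof: Part 1 rests on the same monotone-decrease/absorption dichotomy around the thresholds $\sqrt{\beta c_0}$ and $2\sqrt{\beta c_0}$, and Part 2 uses the same key device of recentering at the fixed point $\sqrt{\beta c_0}$ (which cancels the additive $\beta$ exactly) followed by reciprocal telescoping to obtain the $c_0/t$ rate. The differences are cosmetic — you prove the absorption step by maximizing the concave one-step map rather than by case analysis, define $t^*$ at level $2\sqrt{\beta c_0}$ instead of $\sqrt{\beta c_0}$, and add an explicit finiteness argument for $t^*$ that the paper omits.
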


\begin{proof}
  1. Assume that for $t\ge 0$,
  $\sqrt{\beta c_0} \le \xi_t \le \xi_0$. Then,
  \begin{align}
    \xi_{t+1}
    \le \xi_t - \frac{\xi_t^2}{c_t} + \beta
    \le \xi_t - \frac{\sqrt{\beta c_0}^2}{c_0} + \beta
    = \xi_t\enspace,
    \label{dp-gcd:techical-lemma-first-ineq}
  \end{align}
  where the second inequality comes from
  $\xi_t \ge \sqrt{\beta c_0}$ and $\xi_t \le \xi_0$ (which implies
  $c_t \le c_0$). We now define the following value $t^*$, which
  defines the point of rupture between two regimes for $\xi_t$:
  \begin{align}
    t^* = \min \left\{ t \ge 0 \,\middle|\, \xi_t \le \sqrt{\beta c_0} \right\}\enspace.
  \end{align}
  Let $t < t^*$, assume that $\xi_t \le \xi_0$,
  then~\eqref{dp-gcd:techical-lemma-first-ineq} holds, that is
  $\xi_{t+1} \le \xi_t \le \xi_0$. By induction, it follows that for
  all~$t < t^*$, $\xi_{t+1} \le \xi_t \le \xi_0$ and $c_t \le c_0$.

  Remark now that $\xi_{t^*} \le \sqrt{\beta c_0}$, we prove by
  induction that $\xi_t$ stays under $2\sqrt{\beta c_0}$ for
  $t\ge t^*$.  Assume that for $t \ge t^*$,
  $\xi_t \le 2\sqrt{\beta c_0}$. Then, there are two
  possibilities. If $\xi_{t} \le \sqrt{\beta c_0}$, then
  \begin{align}
    \xi_{t+1}\le \xi_t - \frac{\xi_t^2}{c_t} + \beta \le \sqrt{\beta
    c_0} + \beta \le 2\sqrt{\beta c_0}\enspace,
  \end{align}
  and $\xi_{t+1} \le 2\sqrt{\beta c_0}$. Otherwise,
  $\sqrt{\beta c_0} \le \xi_t \le 2\sqrt{\beta c_0} \le \xi_0$
  and~\eqref{dp-gcd:techical-lemma-first-ineq} holds, which gives
  $\xi_{t+1} \le \xi_t \le 2\sqrt{\beta c_0}$. We proved that for
  $t \ge t^*$, $\xi_t \le 2\sqrt{\beta c_0}$, which
  concludes the proof of the first part of the lemma.

  2. We start by proving this statement for $0 < t < t^*-1$. Define
  $\omega = \frac{2u}{c_0}$ and $u = \sqrt{\beta c_0}$. The
  assumption on $\xi_t$ implies, by the first part of the lemma,
  $\xi_{t+1} \le \xi_t - \frac{\xi_t^2}{c_t} + \beta \le \xi_t -
  \frac{\xi_t^2}{c_0} + \beta$, which can be rewritten
  \begin{align}
    \xi_{t+1} - u \le (1 - \omega) (\xi_t - u) - \frac{(\xi_t - u)^2}{c_0} \enspace,
    \label{dp-gcd:techical-lemma-nice-recursion}
  \end{align}
  since
  $(1 - \omega) (\xi_t - u) - \frac{(\xi_t - u)^2}{c_0} = \xi_t -
  \omega \xi_t - u + \omega u - \frac{\xi_t^2}{c_0} - \frac{2\xi_t
    u}{c_0} - \frac{u^2}{c_0} = \xi_t - \frac{\xi_t^2}{c_0} - u +
  \omega u - \frac{u^2}{c_0}$, and
  $\omega u - \frac{u^2}{c_0} = \frac{u^2}{c_0} = \beta$. Since
  $t < t^*-1$, $\xi_{t+1} - u > 0$ and $\xi_t - u > 0$, we can thus
  divide ~\eqref{dp-gcd:techical-lemma-nice-recursion} by
  $(\xi_{t+1} - u)(\xi_{t} - u)$ to obtain
  \begin{align}
    \frac{1}{\xi_t - u}
    \le \frac{1 - \omega}{\xi_{t+1} - u} - \frac{\xi_t - u}{(\xi_{t+1} - u) c_0}
    \le \frac{1 - \omega}{\xi_{t+1} - u} - \frac{1}{c_0}
    \le \frac{1}{\xi_{t+1} - u} - \frac{1}{c_0}\enspace,
  \end{align}
  where the second inequality comes from
  $\xi_{t+1} - u \le \xi_t - u$ from the first part of the lemma. By
  applying this inequality recursively and taking the inverse of the
  result, we obtain the desired resuld
  $\xi_t \le \frac{c_0}{t} + \sqrt{\beta c_0} \le \frac{c_0}{t} +
  2\sqrt{\beta c_0}$ for all $0 < t < t^*$.

  For $t \ge t^*$, we have already proved that
  $\xi_t \le 2\sqrt{\beta c_0} \le \frac{c_0}{t} + 2\sqrt{\beta
    c_0}$, which concludes our proof.
\end{proof}

\subsubsection{Convex Utility Result}
\label{sec:conv-util-result}

\begin{restate-theorem}{\ref{thm:greedy-cd-utility}} (Convex Case)
  \label{gcd-utility-proof:general-convex}
  Let $\epsilon, \delta \in (0,1]$. Assume $\ell(\cdot; d)$ is a
  convex and $L$-component-Lipschitz loss function for all
  $d \in \cX$, and $f$ is $M$-component-smooth. Define $\cW^*$ the set
  of minimizers of $f$, and $f^*$ the minimum of $f$. Let
  $w_{priv}\in\mathbb{R}^p$ be the output of
  \Cref{algo:private-greedy-coordinate-descent} with step sizes
  $\gamma_j = {1}/{M_j}$, and noise scales $\lambda_1,\dots,\lambda_p,\lambda_1',\dots,\lambda_p'$
  set as in Theorem~\ref{thm:greedy-cd-privacy} (with $T$ chosen
  below) to ensure $(\epsilon,\delta)$-DP. Then, the following holds
  for $\zeta\in(0,1]$:
  \begin{align}
    f(w_{priv}) - f(w^*) \le \frac{8 R_M^2}{T} + \sqrt{32 R_M^2 \beta}
    \enspace,
  \end{align}
  where
  $\beta = \frac{2\lambda_{\max}^2}{M_{\min}}
  \log(\frac{8Tp}{\zeta})^2$, and
  $\displaystyle R_M = \max_{w\in\RR^p} \min_{w^*\in \cW^*} \left\{
    \norm{w-w^*}_{M,1} \mid f(w) \le f(w^0) \right\}$.  If we set
  $T = \Big(\frac{n^2 \epsilon^2 R_M^2 M_{\min}}{2^7L_{\max}^2
    \log(1/\delta)}\Big)^{{1}/{3}}$, then with probability at least
  $1-\zeta$,
  \begin{align}
    f(w^T) - f(w^0)
      = \widetilde O \Big( \frac{R_M^{4/3}L_{\max}^{2/3} \log(p/\zeta)}{M_{\min}^{1/3} n^{2/3}\epsilon^{2/3}} \Big)
      \enspace.
  \end{align}
\end{restate-theorem}

\begin{proof}
  Let $\xi_t = f(w^t) - f(w^*)$. We upper bound the following
  probability by the union bound, and the fact that for $t \ge 0$, the
  events $E_j^t: $~``coordinate $j$ is updated at step $t$'' for
  $j \in [p]$ partition the probability space:
  \begin{align}
    \prob{
    \exists t,
    \xi_{t+1} \ge
    \xi_t - \frac{\xi_t^2}{8\norm{w^t-w^*}_{M,1}^2} + \beta }
    \le
    \sum_{t=0}^{T-1} \prob{
    \xi_{t+1} \ge
    \xi_{t} - \frac{\xi_{t}^2}{8\norm{w^t-w^*}_{M,1}^2} + \beta } & \\
    =
    \sum_{t=0}^{T-1} \sum_{j=1}^p \prob{
    \xi_{t+1} \ge
    \xi_{t} - \frac{\xi_{t}^2}{8\norm{w^t-w^*}_{M,1}^2} + \beta~\land~E_j^t } \enspace. &
  \end{align}

  \Cref{gcd-utility-proof:descent-lemma-convex} gives
  $\xi_{t+1} \le \xi_t - \frac{\xi_t^2}{8 \norm{w^t-w^*}_{M,1}^2} +
  \frac{1}{2M_{j}} \abs{\eta_{j}^t}^2 + \frac{1}{2M_{j}}
  \abs{\chi_{j}^t}^2 + \frac{1}{4M_{j^*}} \abs{\chi_{j^*}^t}^2$. We
  thus have the following upper bound:
  \begin{align}
    \prob{
    \exists t,
    \xi_{t+1} \ge
    \xi_t - \tfrac{1}{8\norm{w^t-w^*}_{M,1}^2} \xi_t^2 + \beta }
    \le
    \sum_{t=0}^{T-1} \sum_{j=1}^p \prob{
    \tfrac{ \abs{\eta_{j}^t}^2}{2M_{j}}  + \tfrac{\abs{\chi_{j}^t}^2}{2M_{j}}
    + \tfrac{\abs{\chi_{j^*}^t}^2}{4M_{j^*}} \ge \beta } & \\
    \le
    \sum_{t=0}^{T-1} \sum_{j=1}^p \prob{
    \abs{\eta_{j}^t}^2 +
    \abs{\chi_{j}^t}^2 + \abs{\chi_{j^*}^t}^2
    \ge 2M_{\min} \beta } \enspace. &
  \end{align}
  By \Cref{dp-gcd:techical-lemma-chernoff} with
  $X_1=\eta_{j}^t \sim \Lap(\lambda_j)$,
  $X_2=\chi_{j}^t \sim \Lap(\lambda_j')$ and
  $X_3=\chi_{j^*}^t \sim \Lap(\lambda_{j^*}')$, it holds that
  \begin{align}
    \prob{ \abs{\eta_{j}^t}^2 +
    \abs{\chi_{j}^t}^2 + \abs{\chi_{j^*}^t}^2 \ge 2M_{\min} \beta }
    & \le 8 \exp\left( - \frac{\sqrt{2 M_{\min} \beta}}{2\lambda_{\max}} \right)
      = \frac{\zeta}{Tp} \enspace,
  \end{align}
  where the last equality comes from
  $\beta = \frac{2\lambda_{\max}^2}{M_{\min}}
  \log(\frac{8Tp}{\zeta})^2$. We have proved that
  \begin{align}
    \prob{
    \exists t,
    \xi_{t+1} \ge
    \xi_t - \frac{\xi_t^2}{8\norm{w^t-w^*}_{M,1}^2} + \beta }
    & \le
    \sum_{t=0}^{T-1} \sum_{j=1}^p \frac{\zeta}{Tp}
    = \zeta \enspace.
  \end{align}

  We now use our \Cref{gcd-utility-proof:technical-lemma}, with
  $\xi_t = f(w^t) - f(w^*)$; $c_0=8R_M^2$ and
  $c_t = 8\norm{w^t-w^*}_{M,1}^2$ for $t > 0$; and
  $\beta = \frac{2\lambda_{\max}^2}{M_{\min}}
  \log(\frac{8Tp}{\zeta})^2$.  These values satisfies the assumptions
  of \Cref{gcd-utility-proof:technical-lemma} since, by the definition
  of $R_M$, it holds that $c_t \le c_0$ whenever $\xi_t \le \xi_0$
  (\ie $f(w^t) - f(w^*) \le f(w^0) - f(w^*)$). Additionally,
  $f(w^0) - f (w^*) \ge \sqrt{32R_M^2\beta}$, therefore
  $f(w^0)-f(w^*) \ge 2\sqrt{\beta c_0}$, and $\beta \le c_0$.

  We obtain the result, with probability at least $1-\zeta$:
  \begin{align}
    f(w^t) - f(w^0)
    & \le \frac{c_0}{t} + 2\sqrt{\beta c_0}
      = \frac{8R_M^2}{t}
      + \frac{64 R_M L_{\max} \log(8Tp/\zeta) \sqrt{T \log(1/\delta)}}{\sqrt{M_{\min}}n\epsilon}
      \enspace.
  \end{align}
  For
  $T = \frac{R_M^{2/3} M_{\min}^{1/3} n^{2/3} \epsilon^{2/3}}{4 L_{\max}^{2/3} \log(1/\delta)^{1/3}}$, we obtain that, with
  probability at least $1-\zeta$,
  \begin{align}
    f(w^t) - f(w^0)
    & \le
      \frac{64 R_M^{4/3} L_{\max}^{2/3} \log(1/\delta)^{1/3}}{M_{\min}^{1/3} n^{2/3} \epsilon^{2/3}}  \log\Big( \frac{pR_M^{2/3} M_{\min}^{1/3} n^{2/3} \epsilon^{2/3}}{4 \zeta L_{\max}^{2/3} \log(1/\delta)^{1/3}}\Big)\enspace,
  \end{align}
  which is the result of the theorem.
\end{proof}

\subsection{Utility for Strongly-Convex Functions}
\label{sec:util-strongly-conv}

\subsubsection{A Key Inequality for Strongly-Convex Functions}
\label{sec:key-ineq-strongly}

We now prove a link between $f$'s largest gradient entry and the
suboptimality gap, under the assumption that there exists a unique
minimizer $w^*$ of $f$ that is $(\alpha,\tau)$-quasi-sparse. Note that this
assumption is not restrictive in general as any vector in $\mathbb{R}^p$ is $
(0,p)$-quasi-sparse, and for any $\tau$ there exists $\alpha>0$ such that the
vector is $(\alpha,\tau)$-quasi-sparse. We will
denote by
$\cW_{\tau, \alpha} \subseteq \RR^p$ the set of
$(\alpha,\tau)$-quasi-sparse vectors of $\RR^p$:
\begin{align}
  \cW_{\tau, \alpha} = \left\{
  w \in \RR^p \mid \abs{ \{ j \in [p] \mid \abs{w_j} \ge \alpha \} } \le \tau
  \right\}\enspace.
\end{align}
When $\alpha=0$, we simply write $\cW_\tau = \cW_{\tau,0}$, that is
the set of $\tau$-sparse vectors. We also define the associated
thresholding operator $\pi_\alpha$, that puts to $0$ the coordinates that are
smaller than $\alpha$, ``projecting'' vectors from
$\cW_{\tau,\alpha}$ to $\cW_\tau$, \ie for $w\in\RR^p$,
\begin{align}
  \pi_\alpha(w) =
  \begin{cases}
    0 & \text{if } \abs{w_j} \le \alpha \enspace, \\
    w_j & \text{otherwise} \enspace.
  \end{cases}
\end{align}
Importantly, restricting a function to $\tau$-sparse vectors changes
its strong-convexity parameter. Let $\tau \ge 0$ and $q \in \{1, 2\}$,
we say a function is $\mu_{M,q}^{(\tau)}$-strongly-convex when
restricted to $\tau$-sparse vectors if for all $\tau$-sparse vectors
$v,w \in \cW_\tau$,
\begin{align}
  f(w)
  & \ge f(v) + \scalar{\nabla f(v)}{w - v}
    + \frac{\mu_{M,q}^{(\tau)}}{2} \norm{w - v}_{M,q}^2
    \enspace.
\end{align}
Remark that when $\tau \ge p$, we recover the usual strong-convexity
parameters. The parameters \wrt $\ell_1$- and $\ell_2$-norms can be
compared using the following inequality \citep{fang2020Greed}, for all
$\tau \ge 0$,
\begin{align}
  \frac{1}{\tau} \mu_{M,2}^{(\tau)}
  & \le \mu_{M,1}^{(\tau)}
    \le \mu_{M,2}^{(\tau)}
    \enspace.
\end{align}
We are ready to prove \Cref{gcd-proof:lemma:upper-bd-grad}.

\begin{lemma}
  \label{gcd-proof:lemma:upper-bd-grad}
  Let $f : \RR^p \rightarrow \RR$ be a function that is
  $M$-component-smooth, and $\mu_{M,1}^{(\tau)}$-strongly-convex \wrt
  $\norm{\cdot}_{M,1}$ when restricted to $\tau$-sparse vectors, for
  $\tau \ge 0$. Assume that the unique minimizer $w^*$ of $f$ is
  $(\tau, \alpha)$-quasi-sparse, for $\alpha, \tau \ge 0$. Let
  $w^t \in \RR^p$ be a $t$-sparse vector for some $t \ge 0$. Then we
  have
  \begin{align}
    - \frac{1}{2} \norm{\nabla f(w^t)}_{M^{-1},\infty}
    & \le - \mu_{M,1}^{(t+\tau)} (f(w^t) - f(w^*))
      + \frac{1}{2}M_{\max}\mu_{M,1}^{(t+\tau)}(p - \tau) \alpha^2\enspace.
  \end{align}
\end{lemma}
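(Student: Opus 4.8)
The statement is a restricted, quasi-sparse version of the Polyak--{\L}ojasiewicz bound $f(w^t)-f^* \le \tfrac{1}{2\mu}\norm{\nabla f(w^t)}_{M^{-1},\infty}^2$ that is obtained by minimizing the strong-convexity lower bound (exactly as in the sketch of \Cref{thm:greedy-cd-utility}); here the left-hand side is read with the square, $\norm{\nabla f(w^t)}_{M^{-1},\infty}^2$. The plan is to upper bound $f(w^t)-f^*$ by $\tfrac{1}{2\mu_{M,1}^{(t+\tau)}}\norm{\nabla f(w^t)}_{M^{-1},\infty}^2$ plus a controlled penalty for the non-sparsity of $w^*$, and then to multiply through by $\mu_{M,1}^{(t+\tau)}$ and rearrange. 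The key device is to introduce a $\tau$-sparse surrogate $\tilde w^*$ for the minimizer, obtained by keeping only the $\tau$ largest-magnitude coordinates of $w^*$ (equivalently, thresholding below $\alpha$ as with $\pi_\alpha$). Because $w^*$ is $(\alpha,\tau)$-quasi-sparse, $\tilde w^*$ is $\tau$-sparse, and since $w^t$ is $t$-sparse, the difference $w^t-\tilde w^*$ is $(t+\tau)$-sparse, which is precisely what licenses the use of the restricted constant $\mu_{M,1}^{(t+\tau)}$.

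I would split $f(w^t)-f^* = \big(f(w^t)-f(\tilde w^*)\big)+\big(f(\tilde w^*)-f^*\big)$ and treat the two terms separately. For the first, restricted strong convexity applied between the $(t+\tau)$-sparse points $w^t$ and $\tilde w^*$ gives $f(\tilde w^*)\ge f(w^t)+\scalar{\nabla f(w^t)}{\tilde w^*-w^t}+\tfrac{\mu_{M,1}^{(t+\tau)}}{2}\norm{\tilde w^*-w^t}_{M,1}^2$. Rearranging and bounding the inner product by H\"older's inequality for the conjugate pair $\norm{\cdot}_{M^{-1},\infty}$ and $\norm{\cdot}_{M,1}$ yields, with $s=\norm{w^t-\tilde w^*}_{M,1}$,
\begin{align*}
  f(w^t)-f(\tilde w^*)
  \le \norm{\nabla f(w^t)}_{M^{-1},\infty}\, s - \tfrac{\mu_{M,1}^{(t+\tau)}}{2}\, s^2
  \le \frac{\norm{\nabla f(w^t)}_{M^{-1},\infty}^2}{2\mu_{M,1}^{(t+\tau)}}\enspace,
\end{align*}
where the final step just maximizes a scalar quadratic over $s\ge 0$. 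Note that this route never minimizes the strong-convexity bound over a possibly dense vector: the only optimization is over the scalar $s$, so the restricted-convexity hypothesis is applied strictly within the sparse regime where it holds.

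For the second term I would invoke $M$-component-smoothness together with $\nabla f(w^*)=0$ to get $f(\tilde w^*)-f^* \le \tfrac12\norm{\tilde w^*-w^*}_{M,2}^2$. Since $\tilde w^*$ keeps the $\tau$ largest entries of $w^*$, the vector $\tilde w^*-w^*$ is supported on the remaining $p-\tau$ coordinates, each of which has magnitude at most $\alpha$ by quasi-sparsity; hence $\norm{\tilde w^*-w^*}_{M,2}^2 \le M_{\max}(p-\tau)\alpha^2$. Combining the two bounds gives $f(w^t)-f^* \le \tfrac{1}{2\mu_{M,1}^{(t+\tau)}}\norm{\nabla f(w^t)}_{M^{-1},\infty}^2 + \tfrac12 M_{\max}(p-\tau)\alpha^2$, and multiplying by $\mu_{M,1}^{(t+\tau)}$ and moving the gradient term to the left-hand side reproduces the claim.

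The main obstacle is conceptual rather than computational: one must guarantee that restricted strong convexity is invoked only between genuinely $(t+\tau)$-sparse points, which is what forces the surrogate $\tilde w^*$ in place of $w^*$ and, in turn, produces the residual term $\tfrac12 M_{\max}(p-\tau)\alpha^2$. The one step to handle with care is the coordinate count giving $p-\tau$ rather than $p$: it relies on defining $\tilde w^*$ as the top-$\tau$ truncation so that exactly $p-\tau$ small coordinates (each below $\alpha$) are discarded, which is precisely where quasi-sparsity is used quantitatively.
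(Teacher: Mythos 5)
Your proof is correct and takes essentially the same route as the paper's: a $\tau$-sparse truncation of $w^*$ as a surrogate, restricted strong convexity applied between $w^t$ and that surrogate and converted into a bound by $\frac{1}{2\mu_{M,1}^{(t+\tau)}}\norm{\nabla f(w^t)}_{M^{-1},\infty}^2$ (your H\"older-plus-scalar-quadratic maximization is exactly the computation behind the paper's appeal to the conjugate of $\frac{\mu_{M,1}^{(t+\tau)}}{2}\norm{\cdot}_{M,1}^2$), and $M$-component-smoothness at $w^*$ with $\nabla f(w^*)=0$ to produce the residual $\frac{1}{2}M_{\max}(p-\tau)\alpha^2$. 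The only cosmetic difference is that the paper first takes an infimum of $f$ over all $\tau$-sparse vectors and only then specializes to $\pi_\alpha(w^*)$, whereas you plug the surrogate in directly; your top-$\tau$ truncation also makes the count of exactly $p-\tau$ discarded coordinates slightly cleaner than thresholding at $\alpha$.
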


\begin{proof}
  Let $w^t \in \RR^p$ be a $t$-sparse vector. Remark that $w^*$ is
  $(\alpha,\tau)$-quasi-sparse, meaning that $\pi_\alpha(w^*)$ is
  $\tau$-sparse. The union of $w^t$ and $\pi_\alpha(w^*)$'s supports
  ($\supp(w^t)$ and $\supp(\pi_\alpha(w^*))$) thus satisfies
  $\abs{\supp(w) \cup \supp(\pi_\alpha(w^*))} \le t + \tau$.  As the
  function $f$ is $\mu_{M,1}^{(t+\tau)}$-strongly-convex with respect
  to $\norm{\cdot}_{M,1}$ and $t + \tau$ sparse vector,
  \begin{align}
    f(\pi_\alpha(w))
    \ge f(w^t)
    + \scalar{\nabla f(w^t)}{\pi_\alpha(w) - w^t}
    + \frac{\mu_{M,1}^{(t+\tau)}}{2} \norm{\pi_\alpha(w) - w^t}_{M, 1}^2
    \enspace.
  \end{align}
  Since $\pi_\alpha : \cW_{\tau,\alpha} \rightarrow \cW_{\tau,0}$ is
  surjective, minimizing this equation for $w \in \cW_{\tau, \alpha}$
  on both sides gives
  \begin{align}
    \inf_{w \in \cW_{\tau}} f(w)
    & \ge f(w^t)
      - \sup_{w \in \cW_{\tau,\alpha}} \left\{
      \scalar{- \nabla f(w^t)}{w^t - \pi_\alpha(w)}
      - \frac{\mu_{M,1}^{(t+\tau)}}{2} \norm{\pi_\alpha(w) - w^t}_{M,1}^2
      \right\} \\
    & \ge f(w^t)
      - \sup_{w \in \RR^p} \left\{
      \scalar{- \nabla f(w^t)}{w^t - w}
      - \frac{\mu_{M,1}^{(t+\tau)}}{2} \norm{w - w^t}_{M,1}^2
      \right\}
      \enspace.
  \end{align}
  The second term corresponds to the conjugate of the
  function
  $\frac{1}{2} \norm{\cdot}_{M,1}^2$, that is
  $\frac{1}{2} \norm{\cdot}_{M^{-1},\infty}^2$
  \citep{boyd2004Convex}. This gives
  \begin{align}
    \inf_{w \in \cW_{\tau}} f(w)
    & \ge f(w^t)
      - \left(
      \frac{\mu_{M,1}^{(t+\tau)}}{2} \norm{\cdot}_1^2
      \right)^* (-\nabla f(w'))
    \\
    & = f(w^t) - \frac{1}{2\mu_{M,1}^{(t+\tau)}} \norm{\nabla f(w')}_{M^{-1},\infty}^2
      \enspace.
  \end{align}
  Finally, $w^*$ is the minimizer of $f$ (which is convex), thus
  $\nabla f(w^*)=0$. The smoothness of $f$ gives, for any $w\in\RR^p$,
  $f(w) \le f(w^*) + \frac{1}{2} \norm{w - w^*}_{M,2}^2$. Hence
  \begin{align}
    \inf_{w \in \cW_\tau} f(w)
    \le f(w^*) + \inf_{w \in \cW_\tau}
    \frac{1}{2} \norm{w - w^*}_{M,2}^2.
    \le f(w^*) + \frac{1}{2} \norm{\pi_\alpha(w^*) - w^*}_{M,2}^2 \enspace,
  \end{align}
  where the second inequality comes from
  $\pi_\alpha(w^*) \in \cW_\tau$, since $w^* \in \cW_{\tau,
    \alpha}$. It remains to observe that
  $\norm{\pi_\alpha(w^*) - w^*}_{M,2}^2 \le M_{\max}(p-\tau) \alpha^2$ to get
  the result.
\end{proof}

\begin{corollary}
  \label{gcd-proof:corr-lemma:upper-bd-grad}
  For $\tau$-sparse vectors, we have $\alpha=0$ and thus
  $(p-\tau) \alpha = 0$. \Cref{gcd-proof:lemma:upper-bd-grad} can thus
  be simplified as
  \begin{align}
    - \frac{1}{2} \norm{\nabla f(w^t)}_{M^{-1},\infty}^2
    & \le - \mu_{M,1}^{(t+\tau)} (f(w^t) - f(w^*))
      \enspace.
  \end{align}
  When vectors are not sparse ($\tau = p$), we recover the inequality
  $- \frac{1}{2} \norm{\nabla f(w^t)}_{M^{-1},\infty}^2 \le -
  \mu_{M,1} (f(w^t) - f(w^*))$.
\end{corollary}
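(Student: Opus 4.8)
The plan is to obtain the claim as an immediate specialization of \Cref{gcd-proof:lemma:upper-bd-grad} to the case $\alpha = 0$, so that no new argument is required. First I would observe that, by \Cref{gcd:quasi-sparsity}, a $\tau$-sparse vector is exactly a $(0,\tau)$-quasi-sparse vector; hence, whenever the unique minimizer $w^*$ of $f$ is $\tau$-sparse, the hypotheses of \Cref{gcd-proof:lemma:upper-bd-grad} are met with $\alpha = 0$, and the lemma applies verbatim to any $t$-sparse $w^t$ with $0 \le t \le p - \tau$.

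I would then substitute $\alpha = 0$ into the conclusion of \Cref{gcd-proof:lemma:upper-bd-grad}. The only $\alpha$-dependent quantity on the right-hand side is the quasi-sparsity residual $\frac{1}{2} M_{\max} \mu_{M,1}^{(t+\tau)} (p-\tau)\alpha^2$, which carries a factor $\alpha^2$ and therefore vanishes identically at $\alpha = 0$, irrespective of the value of $p - \tau$. What survives is precisely the asserted inequality
\begin{align*}
  -\frac{1}{2}\norm{\nabla f(w^t)}_{M^{-1},\infty}^2
  \le -\mu_{M,1}^{(t+\tau)}\big(f(w^t) - f(w^*)\big)\enspace.
\end{align*}

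Because the statement is a pure specialization, there is no genuine obstacle: the derivation is a single substitution, and the only point requiring any care is recognizing sparsity as the $\alpha = 0$ instance of quasi-sparsity. If a self-contained argument were preferred, the proof of \Cref{gcd-proof:lemma:upper-bd-grad} could instead be re-run directly in the sparse case, where the thresholding operator collapses ($\pi_0(w^*) = w^*$) so that $w^* \in \cW_\tau$ and the final smoothness step reads $\inf_{w \in \cW_\tau} f(w) \le f(w^*)$ with no residual; but invoking the lemma and setting $\alpha = 0$ is the most economical route.
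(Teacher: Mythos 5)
Your proposal is correct and matches the paper exactly: the corollary is obtained by recognizing $\tau$-sparsity as $(0,\tau)$-quasi-sparsity and substituting $\alpha=0$ into \Cref{gcd-proof:lemma:upper-bd-grad}, which kills the residual term $\frac{1}{2}M_{\max}\mu_{M,1}^{(t+\tau)}(p-\tau)\alpha^2$. Your added remark that one could rerun the lemma's proof directly with $\pi_0(w^*)=w^*$ is a fine sanity check but, as you note, unnecessary.
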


\subsubsection{General Strongly-Convex Utility Result}
\label{sec:gener-util-result}

\begin{restate-theorem}{\ref{thm:greedy-cd-utility}} (Strongly-Convex
  Case) Let $\epsilon, \delta \in (0,1]$. Assume $\ell(\cdot; d)$ is a
  $\mu_{M,1}$-strongly-convex \wrt $\norm{\cdot}_{M,1}$ and
  $L$-component-Lipschitz loss function for all $d \in \cX$, and $f$
  is $M$-component-smooth. Let $\cW^*$ be the set of minimizers of
  $f$, and $f^*$ the minimum of $f$. Let $w_{priv}\in\mathbb{R}^p$ be
  the output of \Cref{algo:private-greedy-coordinate-descent} with
  step sizes $\gamma_j = {1}/{M_j}$, and noise scales
  $\lambda_1,\dots,\lambda_p,\lambda_1',\dots,\lambda_p'$ set as in
  Theorem~\ref{thm:greedy-cd-privacy} (with $T$ chosen below) to
  ensure $(\epsilon,\delta)$-DP. Then, the following holds for
  $\zeta\in(0,1]$:
  \begin{align}
    f(w^T) - f(w^*)
    \le (1 - \frac{\mu_{M,1}}{2})^T (f(w^0) - f(w^*))
    + \frac{64 T L_{\max}^2 \log(1/\delta)}{M_{\min} \mu_{M,1} n^2 \epsilon^2} \log(\frac{2Tp}{\zeta})
    \enspace.
  \end{align}
  If we set
  $T = \frac{2}{\mu_{M,1}} \log(\frac{M_{\min} \mu_{M,1} n^2 \epsilon^2
    (f(w^0)-f(w^*)}{32 L_{\max}^2 \log(1/\delta)})$, then with
  probability at least $1-\zeta$,
  \begin{align}
    f(w^T) - f(w^*)
    & = \widetilde O\Big( \frac{L_{\max}^2 \log(p/\zeta)}{M_{\min} \mu_{M,1}^2 n^2\epsilon^2} \Big)
    \enspace.
  \end{align}
\end{restate-theorem}

\begin{proof}
  When $f$ is $\mu_{M,1}$-strongly-convex \wrt the norm
  $\norm{\cdot}_{M,1}$, \Cref{gcd-proof:corr-lemma:upper-bd-grad} with
  $\tau=p$ and $\alpha=0$ (which holds for any vector) yields
  \begin{align}
    - \frac{1}{2} \norm{\nabla f(w^t)}_{M^{-1},\infty}^2
    & \le - \mu_{M,1} (f(w^t) - f(w^*))
      \enspace.
  \end{align}
  We replace this in \Cref{gcd-utility-proof:descent-lemma} to obtain
  \begin{align}
    f(w^{t+1}) - f(w^*)
    & \le (1 - \frac{\mu_{M,1}}{4}) (f(w^t) - f(w^*))
      + \frac{1}{2M_{j}} \abs{\eta_{j}^t}^2
      + \frac{1}{2M_{j}} \abs{\chi_{j}^t}^2
      + \frac{1}{4M_{j^*}} \abs{\chi_{j^*}^t}^2 \enspace.
  \end{align}
  Analogously to the proof of \Cref{gcd-utility-proof:general-convex},
  we define $\xi_t = f(w^t) - f(w^*)$ for all $0 \le t \le T$, and
  show that
  $\prob{ \exists t, \xi_{t+1} \ge (1 - \frac{\mu_{M,1}}{4}) \xi_t +
    \beta } \le \zeta/Tp$, with
  $\beta = \frac{2\lambda_{\max}^2}{M_{\min}}
  \log(\frac{8Tp}{\zeta})^2$. This yields that, with probability at
  least $1-\zeta$,
  \begin{align}
    f(w^T) - f(w^*)
    & \le (1 - \frac{\mu_{M,1}}{4})^T (f(w^0) - f(w^*))
      + \sum_{t=0}^{T-1}  (1 - \frac{\mu_{M,1}}{4})^{T-t} \beta \\
    & \le (1 - \frac{\mu_{M,1}}{4})^T (f(w^0) - f(w^*))
      + \frac{4}{\mu_{M,1}} \frac{32 T L_{\max}^2 \log(1/\delta)}{M_{\min} n^2 \epsilon^2}
      \log\Big(\frac{8Tp}{\zeta}\Big)^2 \enspace,
  \end{align}
  With
  $T = \frac{4}{\mu_{M,1}} \log\Big( \frac{\mu_{M,1} M_{\min} n^2 \epsilon^2
    (f(w^0)-f(w^*))}{128L_{\max}^2\log(1/\delta)\log(8p/\zeta)} \Big)$
  we have, with probability at least $1-\zeta$,
  \begin{align}
    f(w^T) - f(w^*)
    & \le
    \frac{128 L_{\max}^2\log(1/\delta)\log(8p/\zeta)^2}{\mu_{M,1}M_{\min}n^2\epsilon^2} \nnlq
      + \frac{512 L_{\max}^2\log(1/\delta)\log(8Tp/\zeta)^2}{\mu_{M,1}^2M_{\min}n^2\epsilon^2} \log\Big( \frac{\mu_{M,1} M_{\min} n^2 \epsilon^2
    (f(w^0)-f(w^*))}{128L_{\max}^2\log(1/\delta)\log(8p/\zeta)^2} \Big)
    \enspace,
  \end{align}
  which is the desired result.
\end{proof}

\subsubsection{Better Utility for Quasi-Sparse Solutions}
\label{sec:fast-init-conv-sup}

\begin{restate-theorem}{\ref{gcd:fast-initial-convergence}}
  \label{gcd-utility-proof:better-utility-strongly-convex}
  Consider $f$ satisfying the hypotheses of
  \Cref{thm:greedy-cd-utility}, with
  \Cref{algo:private-greedy-coordinate-descent} initialized at
  $w^0=0$. We denote its output $w^T$, and assume that its iterates
  remain $s$-sparse for some $s \le p$.  Assume that, for all
  $\tau' \ge 0$, $f$ is $\mu_{M,1}^{(\tau')}$-strongly-convex \wrt
  $\norm{\cdot}_{M,1}$ for $\tau'$-sparse
  vectors %
  and $\mu_{M,2}$-strongly-convex \wrt $\norm{\cdot}_{M,2}$, and that
  the (unique) solution of problem~\eqref{eq:dp-erm} is
  $(\alpha,\tau)$-quasi-sparse for some $\alpha,\tau\ge 0$.  Let
  $T \ge 0$, $\zeta\in[0,1]$, and
  $\beta = \frac{2\lambda_{\max}^2}{M_{\min}} \log(TP/\zeta)^2$. Then
  for all $t \le T$ we have that, with probability at least
  $1 - \zeta$:
  \begin{align}
    f(w^T) & - f(w^*)
             \le \Big(1 - \frac{\mu_{M,1}^{(\min(s,T)+\tau)}}{4}\Big)^T (f(w^0) - f(w^*))
             + \frac{4(\min(s,T)+\tau)\beta}{\mu_{M,2}} + \frac{\min(s, T)+\tau}{8} (p - \tau) \alpha^2
    \label{gcd-utility-proof:better-utility-strongly-convex:ineq-1} \\
    & \le \Big(1 - \frac{\mu_{M,2}}{4(\min(s,T)+\tau)}\Big)^T (f(w^0) - f(w^*))
      + \frac{4(\min(s,T)+\tau)\beta}{\mu_{M,2}} + \frac{\min(s,T)+\tau}{8} (p - \tau) \alpha^2
      \enspace.
      \label{gcd-utility-proof:better-utility-strongly-convex:ineq-2}
  \end{align}
  Setting
  $T = \frac{s+\tau}{\mu_{M,2}} \log((f(w^0)-f^*)
  M_{\min}\mu_{M,2}n^2\epsilon^2/L^2)$, and assuming
  $\alpha^2 = O\left( L_{\max}^2 (s+\tau) / M_{\min}\mu_{M,2}^2
    pn^2\epsilon^2 \right)$, we obtain that with probability at least
  $1-\zeta$,
  \begin{align}
    f(w^T) - f^* = \widetilde O\left( \frac{L_{\max}^2}{M_{\min}}
    \frac{(s+\tau)^2 \log(2p/\zeta)}{\mu_{M,2} n^2\epsilon^2 } \right)
    \enspace.
  \end{align}

\end{restate-theorem}

\begin{proof}
  First, we remark that at each iteration, we change only one
  coordinate. Therefore, after $t$ iterations, the iterate $w^t$ is at
  most $t$-sparse. Since all iterates are also $s$-sparse, it is
  $\min(s,t)$-sparse.  Additionally, we assumed that $w^*$ is
  $(\tau, \alpha)$-almost-sparse. Therefore,
  \Cref{gcd-proof:lemma:upper-bd-grad} yields
  \begin{align}
    - \frac{1}{2} \norm{\nabla f(w^t)}_{M^{-1},\infty}
    & \le - \mu_{M,1}^{(\min(s,t)+\tau)} (f(w^t) - f(w^*))
      + \frac{\mu_{M,1}^{(\min(s,t)+\tau)}}{2} (p - \tau) \alpha^2\enspace,
  \end{align}
  and \Cref{gcd-utility-proof:descent-lemma} becomes
  \begin{align}
    f(w^{t+1}) - f(w^*)
    & \le (1 - \frac{\mu_{M,1}^{(\min(s,t)+\tau)} }{4}) (f(w^t) - f(w^*))
      + \frac{\mu_{M,1}^{(\min(s,t)+\tau)}}{8} (p - \tau) \alpha^2 \nnlq
      + \frac{1}{2M_{j}} \abs{\eta_{j}^t}^2
      + \frac{1}{2M_{j}} \abs{\chi_{j}^t}^2
      + \frac{1}{4M_{j^*}} \abs{\chi_{j^*}^t}^2 \enspace.
  \end{align}
  Then by Chernoff's equality, we obtain (similarly to the proof of
  \Cref{thm:greedy-cd-utility} for the convex case) that with
  probability at least $1-\zeta$, for $T \ge 0$,
  \begin{align}
    f(w^T) - f(w^*)
    & \le \prod_{t=0}^T \Big(1 - \frac{\mu_{M,1}^{(\min(s,t)+\tau)}}{4}\Big) (f(w^0) - f(w^*)) \nnlq
      + \sum_{t=0}^{T-1} \prod_{k=T-t}^{T} \Big(1 - \frac{\mu_{M,1}^{(\min(s,k)+\tau)}}{4}\Big)
      \Big( \beta + \frac{\mu_{M,1}^{(\min(s,t)+\tau)}}{8} (p - \tau) \alpha^2 \Big)
    \enspace.
  \end{align}
  Since for $k \in [T]$,
  $\mu_{M,1}^{\min(s,k)+\tau} \ge \mu_{M,1}^{\min(s,T)+\tau}$, we can
  further upper bound
  $\mu_{M,1}^{(\min(s,t)+\tau)} \le \mu_{M,1}^{(\tau)}$, and
  $1 - \frac{\mu_{M,1}^{(\min(s,t)+\tau)}}{4} \le 1 -
  \frac{\mu_{M,1}^{(\min(s,T)+\tau)}}{4}$ and
  \begin{align}
  \sum_{t=0}^{T-1} \prod_{k=T-t}^{T} \Big(1 -
  \frac{\mu_{M,1}^{(\min(s,k)+\tau)}}{4}\Big) \le \sum_{t=0}^{T-1} \Big(1 -
  \frac{\mu_{M,1}^{(\min(s,T)+\tau)}}{4}\Big)^t \le
  \frac{4}{\mu_{M,1}^{(\min(s,T)+\tau)}}\enspace,
  \end{align}
  which allows to simplify the above expression to
  \begin{align}
    f(w^T) & - f(w^*)
             \le \Big(1 - \frac{\mu_{M,1}^{(\min(s,T)+\tau)}}{4}\Big)^T (f(w^0) - f(w^*))
      + \frac{4}{\mu_{M,1}^{(\min(s,T)+\tau)}} \Big( \beta + \frac{\mu_{M,1}^{(\tau)}}{8} (p - \tau) \alpha^2 \Big) \\
    & \le \Big(1 - \frac{\mu_{M,2}}{4(\min(s,T)+\tau)}\Big)^T (f(w^0) - f(w^*))
      + \frac{4(\min(s,T)+\tau)}{\mu_{M,2}} \Big( \beta + \frac{\mu_{M,2}}{8} (p - \tau) \alpha^2 \Big) \\
    & \le \Big(1 - \frac{\mu_{M,2}}{4(\min(s,T)+\tau)}\Big)^T (f(w^0) - f(w^*))
      + \frac{4(\min(s,T)+\tau)\beta}{\mu_{M,2}} + \frac{\min(s,T)+\tau}{8} (p - \tau) \alpha^2
      \enspace,    \label{gcd-proof:thm-fast-init:utility-unbalanced}
  \end{align}
  where the second inequality follows from
  $\mu_{M,1}^{(\min(s,T)+\tau)} \ge
  \frac{\mu_{M,2}^{(\min(s,T)+\tau)}}{\min(s,T)+\tau} \ge
  \frac{\mu_{M,2}}{\min(s,T)+\tau}$ and
  $\mu_{M,1}^{(\tau)} \le \mu_{M,2}$. We have proven
  inequalities~\eqref{gcd-utility-proof:better-utility-strongly-convex:ineq-1}
  and \eqref{gcd-utility-proof:better-utility-strongly-convex:ineq-2}
  of the theorem.

  When
  $\alpha^2 = O\left( L_{\max}^2 (s+\tau) / M_{\min}\mu_{M,2}^2
    pn^2\epsilon^2 \right)$, the two additive terms
  of~\eqref{gcd-proof:thm-fast-init:utility-unbalanced} are
  $O((s+\tau)\beta/\mu_{M,2})$. Since
  $\min(s, T) + \tau \le s + \tau$, we choose
  $T = \frac{s+\tau}{\mu_{M,2}} \log((f(w^0)-f^*)
  M_{\min}\mu_{M,2}n^2\epsilon^2/L^2)$ to balance all the terms and
  obtain the result.
\end{proof}

\section{Greedy Coordinate Descent for Composite Problems}
\label{sec:greedy-coord-desc}

Consider the problem of privately approximating
\begin{align}
  \label{eq:dp-composite-erm}
  w^* \in
  \argmin_{w \in \mathbb{R}^p}
  \frac{1}{n} \sum_{i=1}^n \ell(w; d_i) + \psi(w),
\end{align}
where $D = (d_1, \dots, d_n)$
is a dataset of $n$ samples drawn from a universe $\cX$,
$\ell: \RR^p \times \cX \rightarrow \RR$ is a loss function which is convex
and smooth in $w$, and
$\psi: \RR^p \rightarrow \RR$ is a convex regularizer which is separable (\ie
$\psi(w) = \sum_{j=1}^p \psi_j(w_j)$) and typically nonsmooth (\eg
$\ell_1$-norm).

\begin{algorithm}[h]
  \caption{DP-GCD (Proximal Version): Private Proximal Greedy CD}
  \label{algo:private-prox-greedy-coordinate-descent}
  \begin{algorithmic}[1]
    \State \textbf{Input:} initial $w^0 \in \RR^p$, iteration count $T > 0, \forall j \in [p],$ noise scales $\lambda_j, \lambda_j'$, step sizes $\gamma_j > 0$.
    \For{$t = 0$ to $T-1$}
      \State Select $j_t$ by the noisy \GSs, \GSr or \GSq rule.
      \State $\displaystyle w^{t+1} = w^t + (\prox{\gamma_j\psi_j} (w^t - \gamma_j (\nabla_j f(w^t) + \eta_{j_t}^t)) - w_j^t) e_j$, \hfill  $\eta_{j_t}^t \sim \Lap(\lambda_{j_t})$.
    \label{algo:private-prox-greedy-coordinate-descent:update}
    \EndFor
    \State \Return $w^T$.
  \end{algorithmic}
\end{algorithm}

We propose a proximal
greedy algorithm to solve \eqref{eq:dp-composite-erm}, see
\Cref{algo:private-prox-greedy-coordinate-descent}. The proximal operator
is the following \citep[we refer to][for a detailed
discussion on proximal operator and related algorithms]{parikh2014Proximal}:
\begin{align}
  \prox_{\gamma \psi}(v) = \argmin_{x\in\RR^p} \{ \frac{1}{2} \norm{v - x}_2^2 + \gamma \psi(x) \}\enspace.
\end{align}

The same privacy guarantees as for the smooth DP-GCD algorithm hold since,
privacy-wise, the proximal step is a post-processing step.
We also adapt the
greedy selection rule to incorporate the non-smooth term. We can use
one of the following three rules
\begin{align}
  j_t & = \argmax_{j\in[p]} \min_{\xi_j\in\partial\psi_j(w_j)} \frac{1}{\sqrt{M_j}}\abs{\nabla_j f(w^t) + \eta_j^t + \xi_j}\tag{\texttt{GS-s}} \enspace, \\
  j_t & = \argmax_{j\in[p]} \sqrt{M_j} \abs{\prox_{\frac{1}{M_j}\psi_j}(w_j^t - \frac{1}{M_j}(\nabla_j f(w^t) + \eta_j^t) - w_j^t} \tag{\texttt{GS-r}} \enspace, \\
  j_t & = \argmax_{j\in[p]} \min_{\alpha\in\RR} \nabla_j f(w^t) \alpha + \frac{M_j}{2} \alpha^2 + \psi_j(w_j^t + \alpha) - \psi_j(w_j^t) \tag{\texttt{GS-q}} \enspace.
\end{align}
These rules are commonly considered in the non-private GCD literature 
\citep[see \eg][]{tseng2009Coordinate,shi2017Primer,karimireddy2019Efficient},
except for the noise $\eta_j^t$ and the rescaling in the \GSs and \GSr
rules.

\section{Experimental Details}
\label{sec:experimental-details}

In this section, we provide more information about the experiments, such as
details on implementation, datasets and the hyperparameter grid we
use for each algorithm. We then
give the full results on our L1-regularized, non-smooth, problems,
with the three greedy rules (as opposed to \Cref{sec:experiments-1} where
we only plotted results for the
\GSr rule). Finally, we provide runtime plots.

\paragraph{Code and setup}
\label{sec:code-setup}

The algorithms are implemented in C++ for efficiency, together with a
Python wrapper for simple use. It is provided as supplementary.
Experiments are run on a computer with a Intel
(R) Xeon(R) Silver 4114
CPU @ 2.20GHz and 64GB of RAM, and took about 10 hours in total to run
(this includes all
hyperparameter tuning).

\paragraph{Datasets}
\label{sec:datasets}

The datasets we use are described in \Cref{tab:datasets-desc}. In
\Cref{fig:solution-nice}, we plot the histograms of the absolute value
of each problem solution's parameters. The purple line indicates the
value of $\alpha$ that ensures that the parameters of the solution are
$(\alpha,5)$-quasi-sparse. Note the logarithmic scale on the $y$-axis.
On the \logg, \loggg, \madelon, \sparse, \california and \dorothea
datasets, the solutions are very imbalanced. In these problems, a very
limited number of parameters stand out, and DP-GCD is able to exploit
this property. This illustrates the results from
\Cref{sec:fast-init-conv}, since DP-GCD can exploit this structure
even in quasi-sparse problems, where $\alpha$ is non zero. Conversely,
the \mtp solution is more balanced: the structural properties of this
dataset are not strong enough for DP-GCD to outperform its competitors.

\begin{figure*}[t]
  \captionsetup[subfigure]{justification=centering}
  \centering
  \begin{subfigure}{0.235\linewidth}
    \centering
    \includegraphics[width=\linewidth]{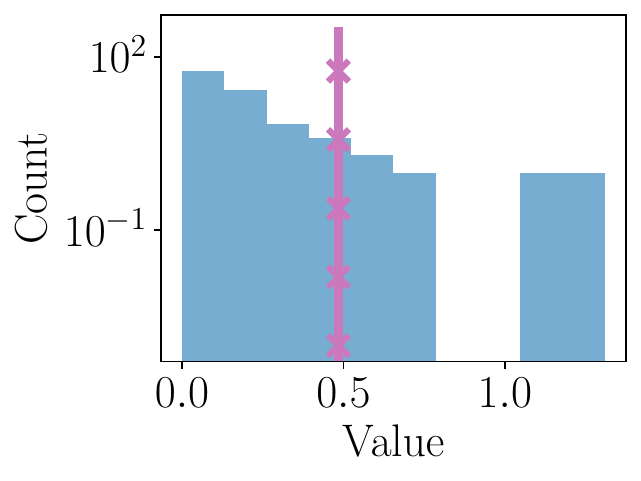}
    \caption{\texttt{log1} \\ Logistic + L2 \\  ($\lambda=1\text{e-}3$)}
    \label{fig:solution-skewed-1}
  \end{subfigure}%
  \begin{subfigure}{0.235\linewidth}
    \centering
    \includegraphics[width=\linewidth]{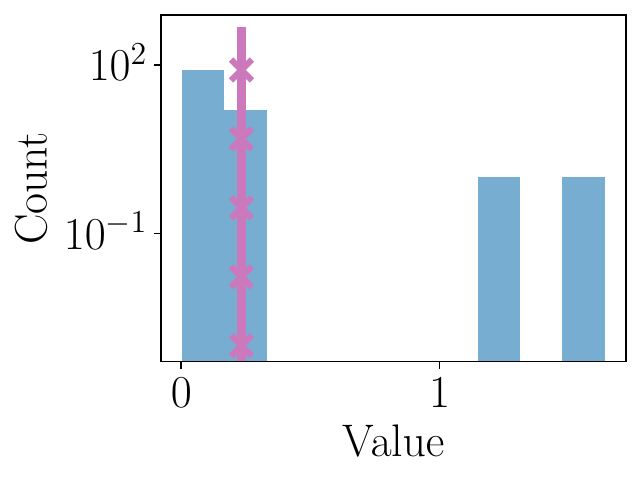}
    \caption{\texttt{log2} \\ Logistic + L2 \\  ($\lambda=1\text{e-}3$)}
    \label{fig:solution-skewed-2}
  \end{subfigure}%
  \begin{subfigure}{0.235\linewidth}
    \centering
    \includegraphics[width=\linewidth]{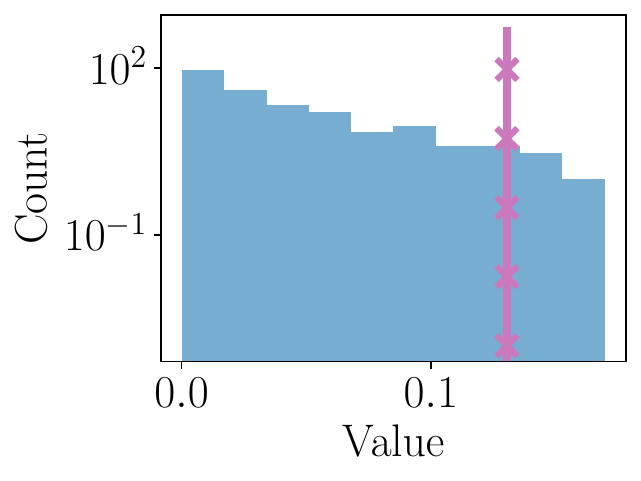}
    \caption{\texttt{mtp}\\ Least Squares + L2 \\  ($\lambda=5\text{e-}8$)}
    \label{fig:solution-mtp}
  \end{subfigure}%
  \begin{subfigure}{0.235\linewidth}
    \centering
    \includegraphics[width=\linewidth]{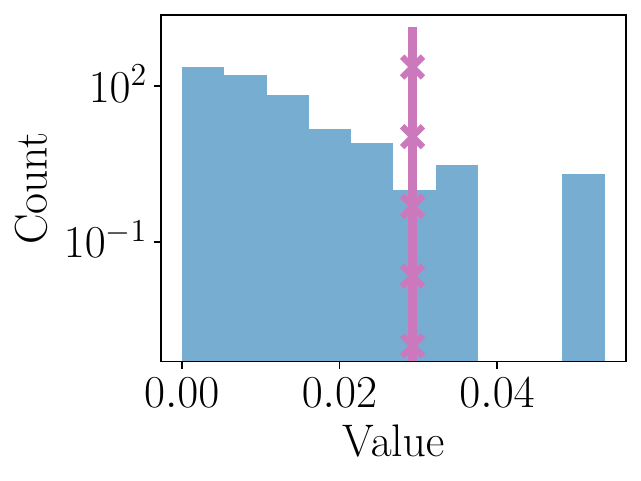}
    \caption{\texttt{madelon} \\ Logistic + L2 \\  ($\lambda=1$)}
    \label{fig:solution-madelon-l2}
  \end{subfigure}%

  \captionsetup[subfigure]{justification=centering}
  \centering
  \begin{subfigure}{0.235\linewidth}
    \centering
    \includegraphics[width=\linewidth]{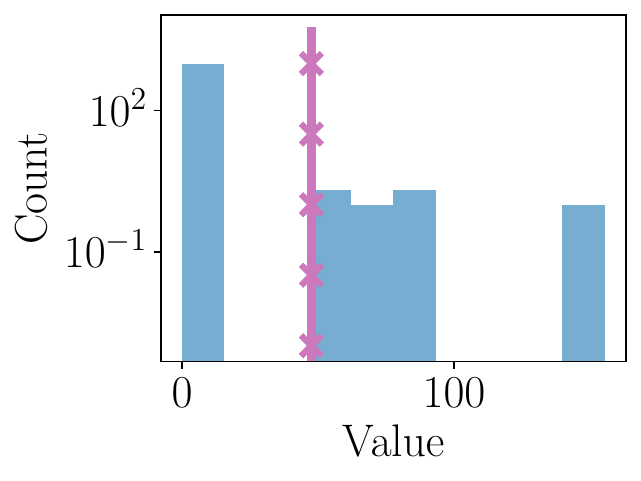}
    \caption{\texttt{square} \\ LASSO \\ ($\lambda=30$)}
    \label{fig:solution-sparse-lasso}
  \end{subfigure}%
  \begin{subfigure}{0.235\linewidth}
    \centering
    \includegraphics[width=\linewidth]{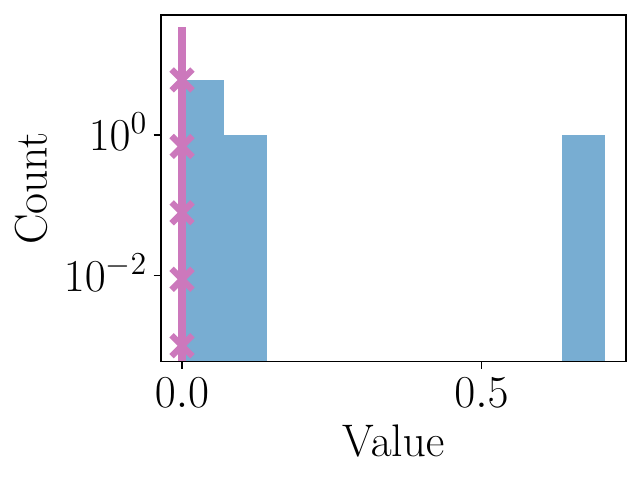}
    \caption{\texttt{california}\\ LASSO \\  ($\lambda=0.1$)}
    \label{fig:solution-california}
  \end{subfigure}%
  \begin{subfigure}{0.235\linewidth}
    \centering
    \includegraphics[width=\linewidth]{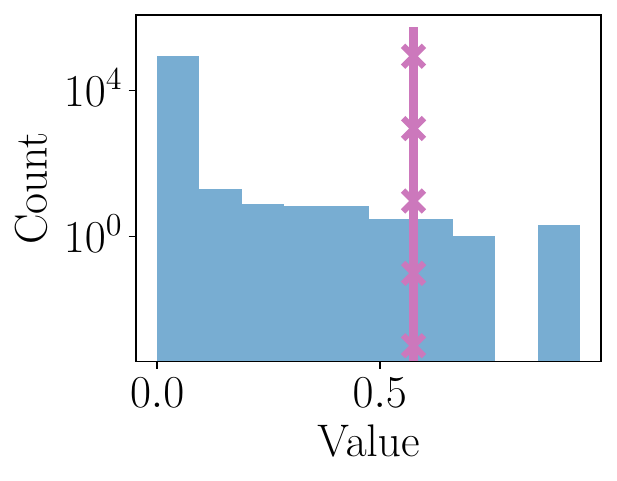}
    \caption{\texttt{dorothea}\\ Logistic + L1 \\  ($\lambda=0.01$)}
    \label{fig:solution-dexter}
  \end{subfigure}%
  \begin{subfigure}{0.235\linewidth}
    \centering
    \includegraphics[width=\linewidth]{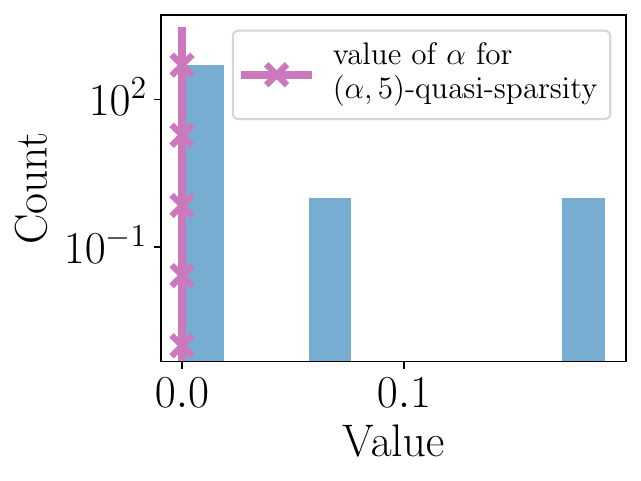}
    \caption{\texttt{madelon}\\ Logistic + L1 \\  ($\lambda=0.05$)}
    \label{fig:solution-madelon-l1}
  \end{subfigure}%

  \caption{Histograms of the absolute value of each problem solution's
    parameters. Purple line indicates the $\alpha$ for which the
    plotted vector is $(\alpha,5)$-quasi-sparse. Y-axis is
    logarithmic.}
  \label{fig:solution-nice}
\end{figure*}

\paragraph{Hyperparameters}
\label{sec:hyperparameters}

On all datasets, we use the same hyperparameter grid. For each
algorithm, we choose between roughly the same number of
hyperparameters. The number of passes on data represents $p$
iterations of DP-CD, $n$ iterations of DP-SGD, and $1$ iteration of
DP-GCD. The complete grid is described in
\Cref{tab:hyperparameter-grid}, and the chosen hyperparameters for
each problem and algorithm are given in
\Cref{table:chosen-hyperparameters}.

\begingroup

\renewcommand*{\arraystretch}{1.2}
\begin{table}
  \centering
  \caption{Hyperparameter grid used in our experiments.}
  \label{tab:hyperparameter-grid}
  \begin{tabular}{ccc}
    \toprule
    Algorithm & Parameter & Values \\
    \midrule
              & Passes on data & \texttt{[0.001, 0.01, 0.1, 1, 2, 3, 5, 10, 20]} \\
    DP-CD     & Step sizes & \texttt{np.logspace(-2, 1, 10)} \\
              & Clipping threshold & \texttt{np.logspace(-4, 6, 50)}  \\
    \midrule
              & Passes on data & \texttt{[0.001, 0.01, 0.1, 1, 2, 3, 5, 10, 20]} \\
    DP-SGD    & Step sizes & \texttt{np.logspace(-6, 0, 10)} \\
              & Clipping threshold & \texttt{np.logspace(-4, 6, 50)}  \\
    \midrule
              & Passes on data & \texttt{[1, 2, 4, 7, 10, 15, 20]} \\
    DP-GCD    & Step sizes & \texttt{np.logspace(-2, 1, 10)} \\
              & Clipping threshold & \texttt{np.logspace(-4, 6, 50)}  \\
    \bottomrule
  \end{tabular}
\end{table}
\endgroup

\paragraph{Recovery of the support}

In \Cref{tab:finding-of-support}, we report the number of coordinates
that are correctly/incorrectly identified as non-zero on $\ell_1$
regularized problems. Contrary to DP-SGD and DP-CD, DP-GCD never
incorrectly identifies a coordinate as non-zero. Additionally, the
suboptimality gap is lower for DP-GCD: its updates thus lead to better
solutions.

\begin{table}
  \centering
  \caption{ Coordinates correctly/incorrectly identified as non-zeros
    by each algorithm, and relative suboptimality gap
    $(f(w^{priv})-f^*)/f^*$ (averaged over 5~runs).}
  \label{tab:finding-of-support}
  \begin{tabularx}{\textwidth}{XXXXX}%
    \toprule
    & \texttt{square} & \texttt{california} & \texttt{dorothea} & \texttt{madelon}  \\
    \midrule
    $\|w^*\|_0$ & 7 & 3 & 72 & 3 \\
    DP-CD & 0 / 0 (0.75) & 3 / 2 (0.0024) & 1 / 1 (0.77) & 0 / 0 (0.0085) \\
    DP-SGD & 0 / 3 (0.75) & 3 / 5 (0.020) & 0 / 0 (0.78) & 0 / 0 (0.012) \\
    DP-GCD & 2 / 0 (0.35) & 2 / 0 (0.00056) & 1 / 0 (0.64) & 1 / 0 (0.0015) \\
    \bottomrule
  \end{tabularx}
\end{table}

\paragraph{Additional experiments on proximal DP-GCD}
\label{sec:addit-exper-prox}

In \Cref{fig:expe-prox}, we show the results of the proximal DP-GCD
algorithm, after tuning the hyperparameters with the grid described above for
each of the \GSs, \GSr and
\GSq rules.

The three rules seem to behave qualitatively the same on \sparse,
\dorothea and \madelon, our three high-dimensional non-smooth
problems. There, most coordinates are chosen about one time. Thus, as
described by \citet{karimireddy2019Efficient}, all the steps are
``\good'' steps (along their terminology): and on such good steps, the
three rules coincide. On the lower-dimensional dataset \california,
coordinates can be chosen more than one time, and ``\bad'' steps are
likely to happen. On these steps, the three rules differ.

\paragraph{Runtime}
Finally, we report the runtime of DP-GCD, in comparison with DP-CD and
DP-SGD in \Cref{fig:expe-time}, that is the counterpart of
\Cref{fig:expe-nice}, except with runtime on the $x$-axis. These
results confirm the fact that DP-GCD can be efficient, although its
iterations are expensive to compute. Indeed, in imbalanced problems,
the small number of iterations of DP-GCD enables it to run faster than
DP-SGD, and in roughly the same time as DP-CD, while improving
utility.

\begin{table}
  \centering
  \caption{
    Selected hyperparameters for every dataset and algorithm.
  }
  \label{table:chosen-hyperparameters}
  \csvautobooktabular{plots/best_params_treated.csv}
\end{table}

\begin{figure*}[t]
  \captionsetup[subfigure]{justification=centering}
  \centering
  \begin{subfigure}{0.046\linewidth}
    \centering
    \vspace{-1.5em}
    \includegraphics[width=\linewidth]{plots/xlegend.pdf}
    \begin{minipage}{.1cm}
      \vfill
    \end{minipage}
  \end{subfigure}%
  \begin{subfigure}{0.235\linewidth}
    \centering
    \includegraphics[width=\linewidth]{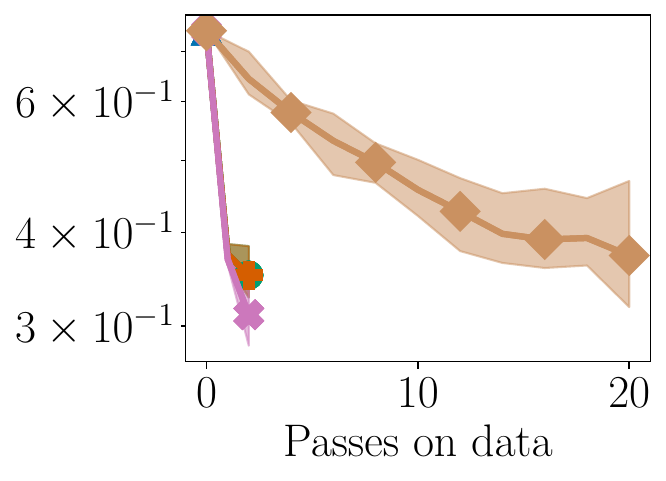}
    \caption{\texttt{sparse} \\ LASSO \\ ($\lambda=30$)}
    \label{fig:expe-sparse-lasso-gs}
  \end{subfigure}%
  \begin{subfigure}{0.235\linewidth}
    \centering
    \includegraphics[width=\linewidth]{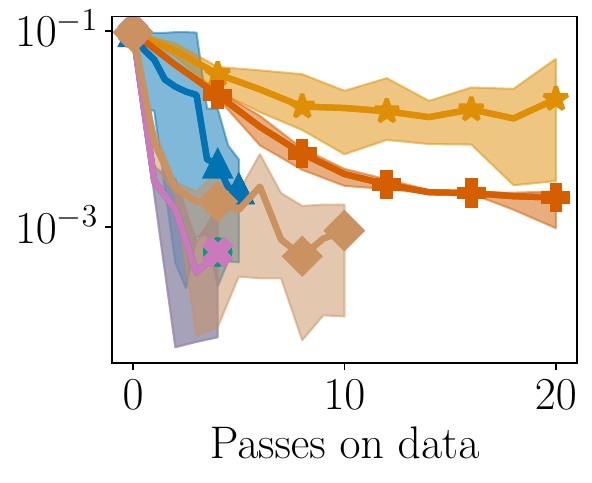}
    \caption{\texttt{california}\\ LASSO \\  ($\lambda=0.1$)}
    \label{fig:expe-california-gs}
  \end{subfigure}%
  \begin{subfigure}{0.235\linewidth}
    \centering
    \includegraphics[width=\linewidth]{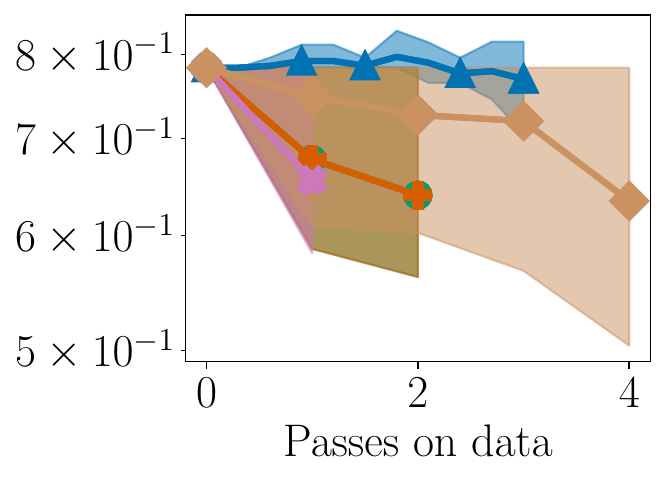}
    \caption{\texttt{dorothea}\\ Logistic + L1 \\  ($\lambda=0.01$)}
    \label{fig:expe-dorothea-gs}
  \end{subfigure}%
  \begin{subfigure}{0.235\linewidth}
    \centering
    \includegraphics[width=\linewidth]{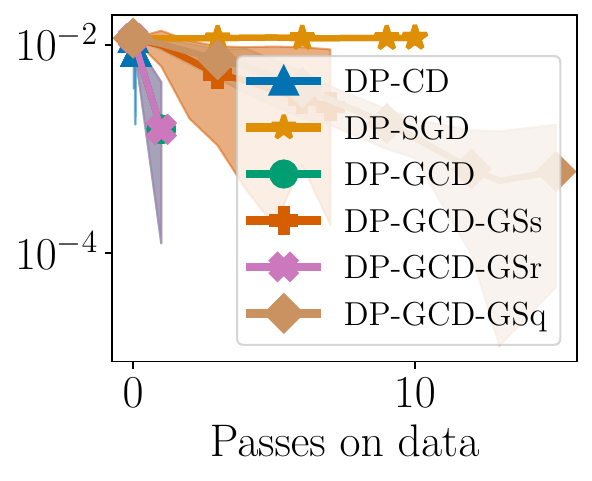}
    \caption{\texttt{madelon}\\ Logistic + L1 \\  ($\lambda=0.05$)}
    \label{fig:expe-madelon-l1-gs}
  \end{subfigure}%

  \caption{Relative error to non-private optimal for DP-CD, proximal
    DP-GCD (with \GSr, \GSs and \GSq rules) and DP-SGD on different
    problems. On the x-axis, $1$ tick represents a full access to the
    data: $p$ iterations of DP-CD, $n$ iterations of DP-SGD and $1$
    iteration of DP-GCD. Number of iterations, clipping thresholds and
    step sizes are tuned simultaneously for each algorithm. We report
    min/mean/max values over 5~runs.}
  \label{fig:expe-prox}
\end{figure*}

\begin{figure*}[t]
  \captionsetup[subfigure]{justification=centering}
  \centering
  \begin{subfigure}{0.04\linewidth}
    \centering
    \vspace{-1.5em}
    \includegraphics[width=\linewidth]{plots/xlegend.pdf}
    \begin{minipage}{.1cm}
      \vfill
    \end{minipage}
  \end{subfigure}%
  \begin{subfigure}{0.235\linewidth}
    \centering
    \includegraphics[width=\linewidth]{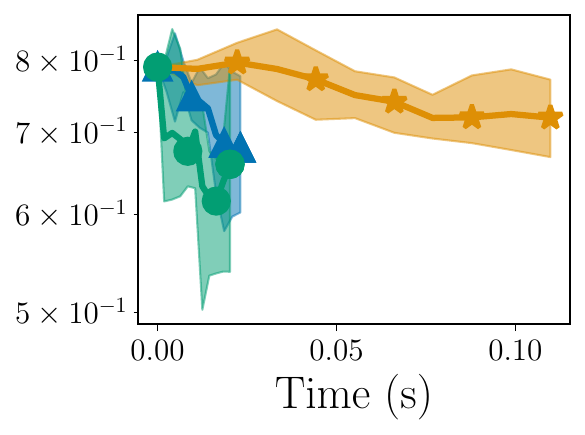}
    \caption{\texttt{log1} \\ Logistic + L2 \\  ($\lambda=1\text{e-}3$)}
    \label{fig:expe-time-skewed-1}
  \end{subfigure}%
  \begin{subfigure}{0.235\linewidth}
    \centering
    \includegraphics[width=\linewidth]{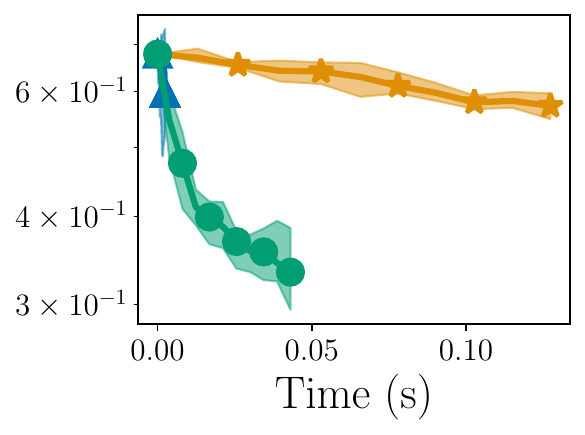}
    \caption{\texttt{log2} \\ Logistic + L2 \\  ($\lambda=1\text{e-}3$)}
    \label{fig:expe-time-skewed-2}
  \end{subfigure}%
  \begin{subfigure}{0.235\linewidth}
    \centering
    \includegraphics[width=\linewidth]{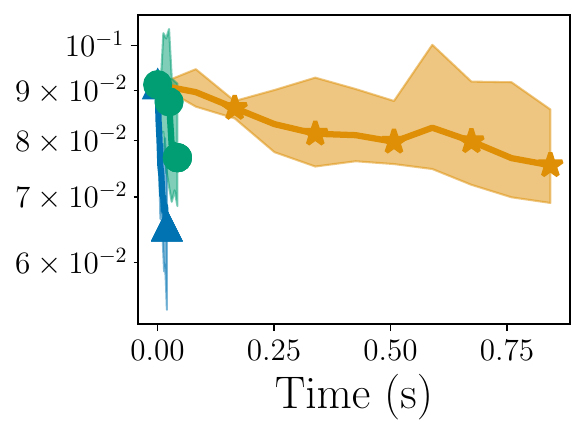}
    \caption{\texttt{mtp}\\ Least Squares + L2 \\  ($\lambda=5\text{e-}8$)}
    \label{fig:expe-time-mtp}
  \end{subfigure}%
  \begin{subfigure}{0.235\linewidth}
    \centering
    \includegraphics[width=\linewidth]{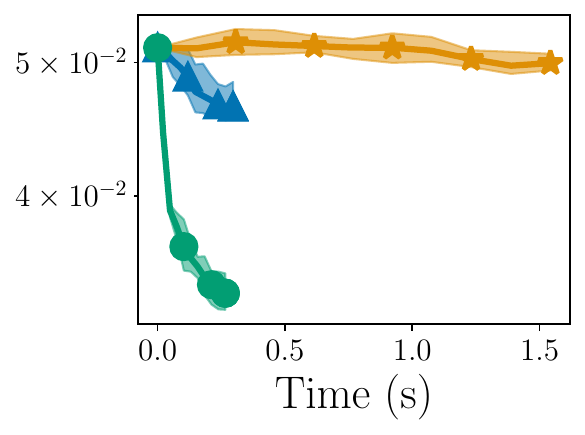}
    \caption{\texttt{madelon} \\ Logistic + L2 \\  ($\lambda=1$)}
    \label{fig:expe-time-madelon-l2}
  \end{subfigure}%

  \captionsetup[subfigure]{justification=centering}
  \centering
  \begin{subfigure}{0.04\linewidth}
    \centering
    \vspace{-1.5em}
    \includegraphics[width=\linewidth]{plots/xlegend.pdf}
    \begin{minipage}{.1cm}
      \vfill
    \end{minipage}
  \end{subfigure}%
  \begin{subfigure}{0.235\linewidth}
    \centering
    \includegraphics[width=\linewidth]{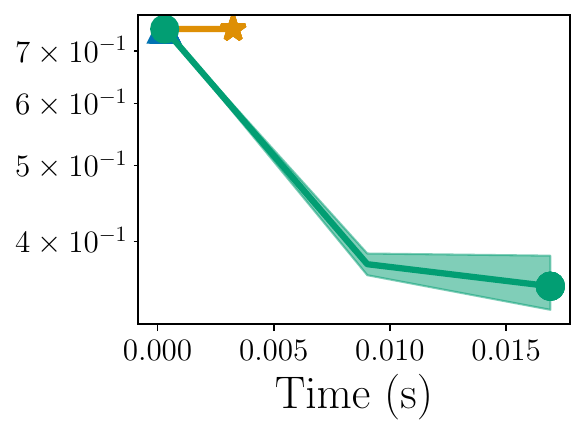}
    \caption{\texttt{sparse} \\ LASSO \\ ($\lambda=30$)}
    \label{fig:expe-time-sparse-lasso}
  \end{subfigure}%
  \begin{subfigure}{0.235\linewidth}
    \centering
    \includegraphics[width=\linewidth]{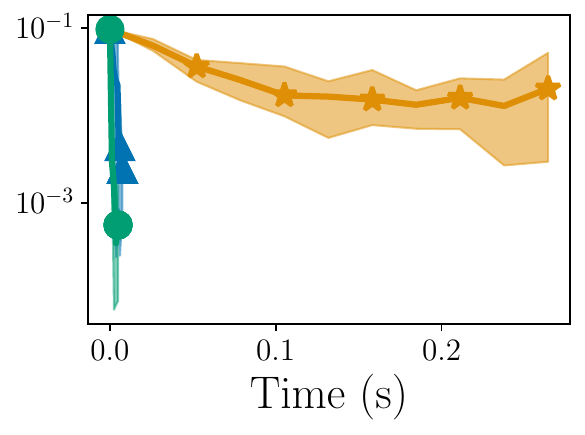}
    \caption{\texttt{california}\\ LASSO \\  ($\lambda=0.1$)}
    \label{fig:expe-time-california}
  \end{subfigure}%
  \begin{subfigure}{0.235\linewidth}
    \centering
    \includegraphics[width=\linewidth]{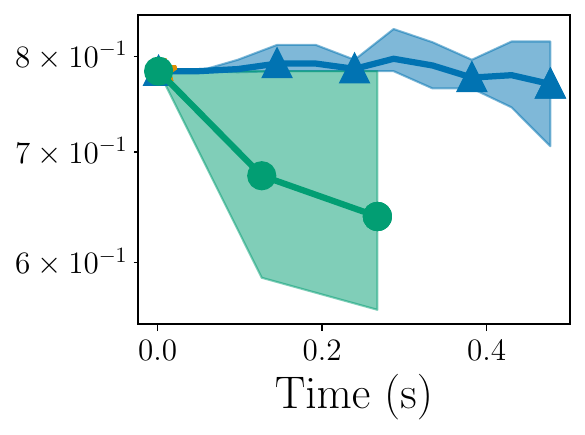}
    \caption{\texttt{dorothea}\\ Logistic + L1 \\  ($\lambda=0.01$)}
    \label{fig:expe-time-dorothea}
  \end{subfigure}%
  \begin{subfigure}{0.235\linewidth}
    \centering
    \includegraphics[width=\linewidth]{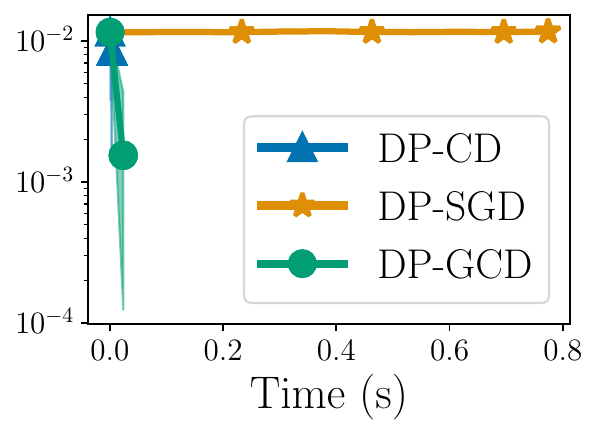}
    \caption{\texttt{madelon}\\ Logistic + L1 \\  ($\lambda=0.05$)}
    \label{fig:expe-time-madelon-l1}
  \end{subfigure}%

  \caption{Relative error to non-private optimal for DP-CD, DP-GCD and
    DP-SGD on different problems, as a function of running
    time. Number of iterations, clipping thresholds and step sizes are
    tuned simultaneously for each algorithm. We report min/mean/max
    values over 5~runs.}
  \label{fig:expe-time}
\end{figure*}

\end{document}


\onecolumn
\aistatstitle{Instructions for Paper Submissions to AISTATS 2023: \\
Supplementary Materials}

\section{FORMATTING INSTRUCTIONS}

To prepare a supplementary pdf file, we ask the authors to use \texttt{aistats2023.sty} as a style file and to follow the same formatting instructions as in the main paper.
The only difference is that the supplementary material must be in a \emph{single-column} format.
You can use \texttt{supplement.tex} in our starter pack as a starting point, or append the supplementary content to the main paper and split the final PDF into two separate files.

Note that reviewers are under no obligation to examine your supplementary material.

\section{MISSING PROOFS}

The supplementary materials may contain detailed proofs of the results that are missing in the main paper.

\subsection{Proof of Lemma 3}

\textit{In this section, we present the detailed proof of Lemma 3 and then [ ... ]}

\section{ADDITIONAL EXPERIMENTS}

If you have additional experimental results, you may include them in the supplementary materials.

\subsection{The Effect of Regularization Parameter}

\textit{Our algorithm depends on the regularization parameter $\lambda$. Figure 1 below illustrates the effect of this parameter on the performance of our algorithm. As we can see, [ ... ]}

\vfill